%% file: main.tex
\documentclass{article}

\usepackage[preprint]{neurips_2026}

\usepackage[utf8]{inputenc} % allow utf-8 input
\usepackage[T1]{fontenc}    % use 8-bit T1 fonts
\usepackage{hyperref}       % hyperlinks
\usepackage{url}            % simple URL typesetting
\usepackage{booktabs}       % professional-quality tables
\usepackage{amsfonts}       % blackboard math symbols
\usepackage{nicefrac}       % compact symbols for 1/2, etc.
\usepackage{microtype}      % microtypography
\usepackage{xcolor}         % colors

\usepackage{algorithm}
\usepackage{graphicx}
\usepackage{amsmath}
\usepackage{amssymb}
\usepackage{amsthm}
\usepackage{multirow}
\usepackage{float}
\usepackage{makecell}
\usepackage{tabularx}
\usepackage{enumitem}

\newtheorem{proposition}{Proposition}[section]

\DeclareMathOperator*{\argmax}{argmax}
\usepackage{algorithmic}
\definecolor{slatepurple}{HTML}{053BC1}    
\definecolor{deepred}{HTML}{DE0000}     

\newcommand{\systemname}{AlphaTransit}  
\title{\systemname: Learning to Design City-scale \\Transit Routes}

\author{%
  Bibek Poudel\textsuperscript{1}
  \quad
  Sai Swaminathan\textsuperscript{1}
  \quad
  Weizi Li\textsuperscript{2}
  \\
  \textsuperscript{1}Department of EECS, University of Tennessee, Knoxville, TN, USA
  \\
  \textsuperscript{2}Department of CSE, University of California, Riverside, CA, USA
  \\
  Correspondence: \texttt{bpoudel3@vols.utk.edu}
}

\begin{document}

\maketitle

%%%%%%%%%%%%%%%%%%%%%%%%%%%%%%%%%%%%%%%%%%%%%%%%%%%%%%%%%%%%

\input{sections/abstract}

\input{sections/intro}

\input{sections/related}
\input{sections/method}

\input{sections/experiment}

\input{sections/conclusion}

\newpage
\bibliography{main}{}
\bibliographystyle{plain}

%%%%%%%%%%%%%%%%%%%%%%%%%%%%%%%%%%%%%%%%%%%%%%%%%%%%%%%%%%%%

\appendix
\newpage
% \clearpage
\setcounter{page}{1}
\input{appendix_sections/search_space}
\input{appendix_sections/frequency_assignment}
\input{appendix_sections/state_representation}
\input{appendix_sections/policy_network}
\input{appendix_sections/training_procedures_hyperparameters}
\input{appendix_sections/real_world_networks_data}
\input{appendix_sections/results}
\input{appendix_sections/broader_impacts}
\newpage
% \input{sections/checklist}

%%%%%%%%%%%%%%%%%%%%%%%%%%%%%%%%%%%%%%%%%%%%%%%%%%%%%%%%%%%%

\end{document}

%% file: sections/abstract.tex
\begin{figure*}[h!]
    \centering
    \vspace{-24pt}
    \includegraphics[width=0.98\linewidth]{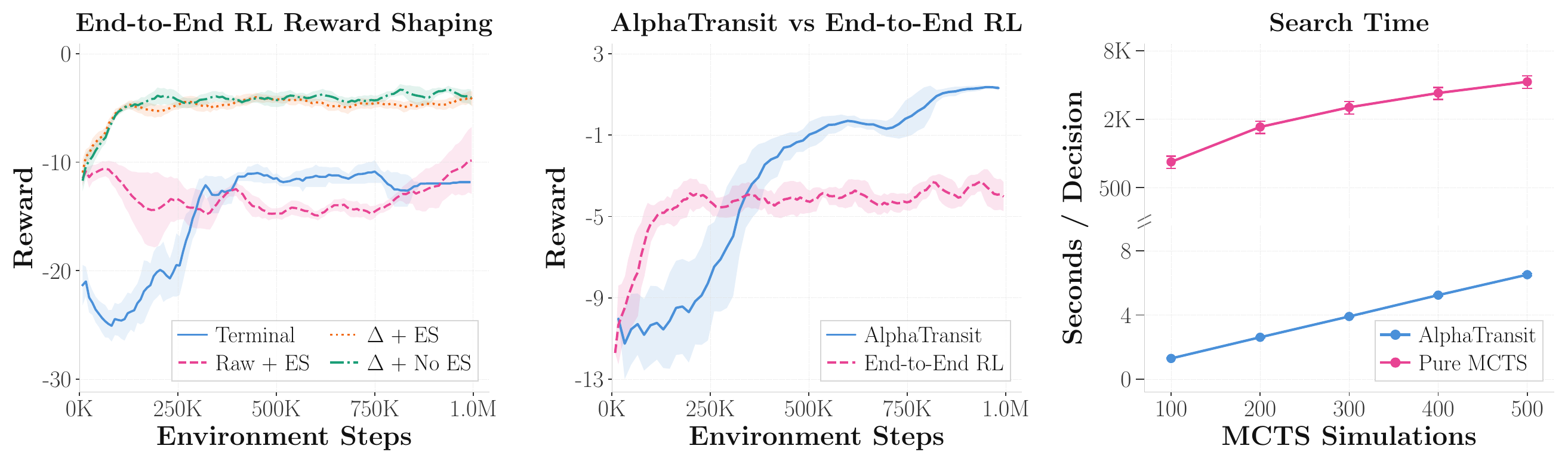}
    \vspace{-6pt}
    \caption{Learning dynamics and search cost under mixed demand ($\alpha=0.3$). \textbf{LEFT}: Curves show averages over two training seeds per reward mode. Reward shaping is critical for End-to-End RL. Early Stopping (ES) penalizes routes that terminate before $L_{\max}$, while Delta-coverage ($\Delta$) rewards only newly covered demand. The $\Delta$ variants are strongest; $\Delta$ + No ES is best. \textbf{MIDDLE}: MCTS search improves sample efficiency under the same environment-step budget. At search depth $N_{\mathrm{iter}}=500$, \systemname{} surpasses End-to-End RL around $3\times10^5$ steps and finishes with a final smoothed reward $1.31$ versus $-4.02$. \textbf{RIGHT}: Learned policy and value estimates make MCTS search practical across search depths $N_{\mathrm{iter}}\in[100,500]$, when benchmarked on the Bloomington network with a single CPU worker. \systemname{} stays within seconds per decision, whereas Pure MCTS requires hundreds to thousands of seconds per decision.}
    \vspace{-4pt}
    \label{fig:learning_overview_0_3}
\end{figure*}

\begin{abstract}
Designing a transit network requires many sequential route extension decisions, but their quality is often visible only after the full network is assembled. This delayed-feedback challenge lies at the heart of the Transit Route Network Design Problem (TRNDP), where route interactions can be deceptive: an extension that appears useful locally can create transfer bottlenecks, produce redundant overlap, or reduce overall throughput. To guide route construction under delayed simulator feedback, we introduce \systemname{}, a search-based planning framework for city-scale bus network design. \systemname{} couples Monte Carlo Tree Search (MCTS) with a neural policy-value network: the policy proposes route extensions, the value estimates downstream design quality, and search uses these predictions to refine each decision. This provides decision-time lookahead during route construction without running simulator rollouts inside the search tree. We evaluate \systemname{} on a new Bloomington TRNDP benchmark with realistic road topology and census-derived demand, under mixed and full transit demand settings. In the Bloomington network, \systemname{} attains the highest service rate in both demand settings, reaching $54.6\%$ and $82.1\%$, respectively. Relative to reinforcement learning without search, these correspond to $9.9\%$ and $11.4\%$ service rate gains; relative to MCTS without learned guidance, they correspond to $2.5\%$ and $11.2\%$ gains. These results suggest that coupling learned guidance with MCTS is more effective than using either approach alone for transit network design. Our code and data are publicly available in \url{https://github.com/poudel-bibek/AlphaTransit}.

\end{abstract}

%% file: sections/intro.tex
\section{Introduction}
\label{sec:intro}

Learning to act under sparse rewards is a central challenge in sequential decision making~\cite{sutton2018reinforcement}. In such settings, combining policy priors and value estimates with Monte Carlo Tree Search (MCTS) has notably achieved superhuman performance in games such as Go, chess, and shogi~\cite{silver2016mastering,silver2017mastering,silver2018general}. This approach has also succeeded with unknown dynamics by planning with a learned model~\cite{schrittwieser2020mastering}, and has yielded breakthroughs in algorithmic discovery for matrix multiplication~\cite{fawzi2022discovering} and low-level sorting routines~\cite{mankowitz2023faster}. Together, these results suggest that learned priors combined with explicit lookahead extract a stronger training signal from sparse reward environments than policy learning alone.

We study this principle in \emph{Transit Route Network Design Problem} (TRNDP), a combinatorial optimization problem in which a planner designs a set of possibly overlapping routes on a shared street network to serve many-to-many passenger flows under operational constraints such as fleet size and cost~\cite{schmidt2024planning,fan2004optimal}. Unlike classical routing problems such as the travelling salesman problem~\cite{lawler1985traveling} or the vehicle routing problem~\cite{toth2014vehicle}, TRNDP jointly designs a route network whose components interact through transfers, congestion, and shared infrastructure~\cite{ceder2016public}. As a result, changing one route can reassign passengers across the network and alter the value of distant route segments. The reward signal is therefore delayed and nonlocal, since the effect of each route extension is observed only after the full network is assembled and simulated. TRNDP is NP-hard~\cite{kepaptsoglou2009transit,owais2026transit}; the Bloomington setting studied in this work admits $\approx 10^{82}$ candidate route sets (Appendix~\ref{app:searchspace}).
The optimization landscape is also deceptive: a locally attractive route extension can create transfer bottlenecks, redundant overlap, and lower global throughput. In addition, the objective is inherently multi criteria, balancing passenger facing metrics such as coverage, waiting time, and journey time against operator side constraints such as fleet size, route overlap, and vehicle utilization.

\begin{figure*}[t!]
    \centering
    \includegraphics[width=0.98\linewidth]{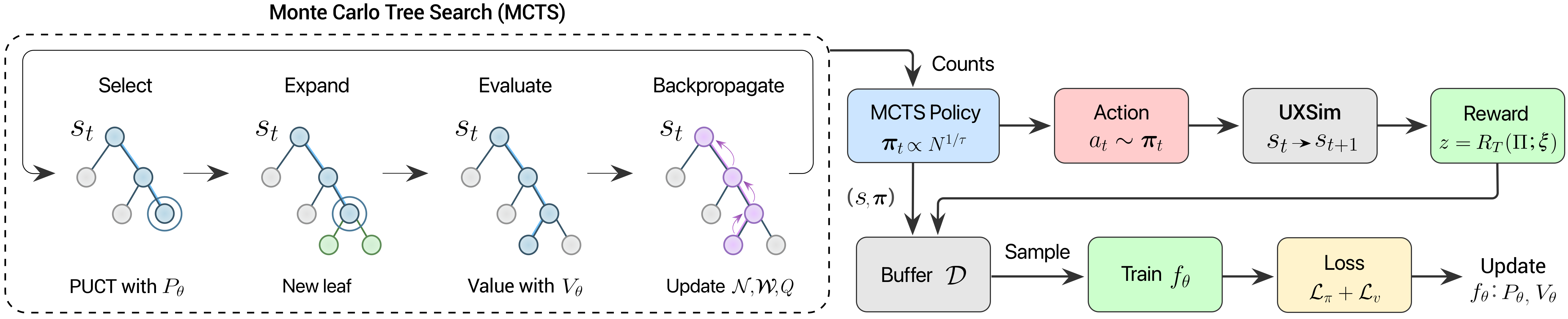}
    \vspace{-4pt}
    \caption{\systemname{} overview. At each route-construction state $s_t$, MCTS uses the policy-value network $f_\theta=(P_\theta,V_\theta)$ to perform selection, expansion, evaluation, and backpropagation. The visit count statistics define a MCTS policy $\pi_t$, from which the next action is sampled to advance the state. After the full route set is completed, UXsim evaluates the design and returns a terminal reward $z$. Tuples $(s_t,\pi_t,z)$ are stored in buffer $\mathcal{D}$ and sampled to train $f_\theta$ with policy and value losses.}
    \label{fig:alphatransit}
    \vspace{-10pt}
\end{figure*}

Prior TRNDP methods handle this difficulty by simplifying the evaluation model, iteratively modifying candidate route sets, or learning route construction policies. Exact and decomposition-based methods follow the first path, making optimization tractable by using analytical passenger routing and travel time objectives rather than simulation~\cite{bertsimas2021data,vermeir2021exact}. Metaheuristics follow the second path, generating and modifying candidate route sets with problem specific heuristic or evolutionary operators~\cite{mumford2013new,nikolic2013transit}. More recently, learning based methods reduce manual design by learning policies that construct routes directly or heuristics that guide search~\cite{yoo2023reinforcement,holliday2025learning}. Across these approaches, local design choices remain hard to assess because their value depends on the completed route set.

This limitation motivates a route construction procedure that improves local decisions through lookahead over partial networks, while avoiding expensive simulation at every node. To address this, we introduce \systemname{}, a search-guided framework that couples MCTS with a neural policy-value network, shown in Fig.~\ref{fig:alphatransit}. For each partial route, the policy provides priors over feasible extensions, and the value estimates downstream design quality. MCTS combines these predictions into a visit-count policy, from which the next extension is sampled to advance the design. During search, leaf states are evaluated by the value head rather than simulator rollouts. The simulator returns a terminal reward only after a complete route set is constructed; the resulting samples are used to update the network to match search-improved policies and predict final rewards. Our contributions are:
\vspace{-4pt}
\begin{itemize}[leftmargin=12pt]
    \setlength{\itemsep}{0pt}
    \setlength{\parskip}{0pt}
    \item We introduce \systemname{} and show that learned lookahead is effective for simulator-defined bus TRNDP, achieving the highest service rate among evaluated methods in both demand regimes.
    \item We present a Bloomington TRNDP benchmark with a topologically accurate road network, census-derived origin-destination demand, and an existing human-designed bus transit reference.
    \item We isolate the role of learned lookahead by comparing \systemname{} against End-to-End Reinforcement Learning (RL) without decision-time search and Pure MCTS without learned priors or value estimates, alongside heuristic, metaheuristic, neural-evolutionary, and real-world baselines.
\end{itemize}

We evaluate \systemname{} on the Bloomington benchmark under mixed and full transit demand regimes. \systemname{} reaches mean service rates of $54.64\%$ and $82.08\%$, respectively, yielding relative service-rate gains of $9.9\%$ and $11.4\%$ over End-to-End RL, and $2.5\%$ and $11.2\%$ over Pure MCTS. Under mixed-demand, \systemname{} also attains the highest bus utilization and second-highest route efficiency. Under full transit demand, it obtains the lowest wait time, highest route efficiency, and highest bus utilization. Together, these comparisons indicate that coupling learned priors with lookahead search makes sparse terminal feedback more useful for route design.

%% file: sections/related.tex
\section{Related Work}
\label{sec:related}
TRNDP has been studied for over four decades, spanning exact, heuristic, and learning-based methods~\cite{kepaptsoglou2009transit,duran2022survey,owais2026transit}. Exact and decomposition-based formulations provide optimization structure for analytical variants of the problem~\cite{bertsimas2021data,vermeir2021exact}. Their tractability, however, depends on simplified demand, assignment, and operating models, which makes simulator-defined objectives with congestion and vehicle capacity difficult to optimize directly. Classical metaheuristics, including genetic algorithms~\cite{mumford2013new}, simulated annealing~\cite{zhao2006simulated}, and bee colony optimization~\cite{nikolic2013transit,duran2022survey}, handle larger design spaces and avoid some of these abstractions, but depend on tailored move operators, penalty terms, and instance-specific tuning. More recently, learning-based approaches reduce manual design by constructing routes with reinforcement learning~\cite{yoo2023reinforcement,darwish2020optimising,li2023transit} or by learning heuristics to guide evolutionary search~\cite{holliday2023augmenting,holliday2024neural,holliday2025learning}. These approaches move toward reusable policies, but most embed learning inside a larger metaheuristic loop rather than producing a standalone construction policy. For methods that construct routes directly, credit assignment remains difficult because the effect of a local route extension on network-wide passenger flow is only observed once routes have been assembled.

Most studies test on small or synthetic benchmark networks~\cite{mandl1980evaluation,heyken2019adaptive} that do not reflect the scale or coupling of realistic urban networks, or rely on analytical objectives and assignment approximations. A smaller line of work uses simulation-based evaluation for bus-oriented transit network redesign and optimization~\cite{manser2020designing,nnene2023simulation}. Both matter because congestion, vehicle capacity, transfer delays, and passenger reassignment are the mechanisms that make TRNDP rewards stochastic, delayed, and nonlocal. Such rewards favor methods with explicit lookahead, where search statistics both guide action selection and serve as training targets for a learned policy~\cite{silver2018general,schrittwieser2020mastering}. Consistent with this motivation, MCTS has been applied in adjacent settings~\cite{kemmerling2023beyond}, including Pareto-optimal transit route planning~\cite{weng2020pareto}, spatial network augmentation~\cite{darvariu2023planning}, surrogate-assisted combinatorial optimization~\cite{amiri2024surrogate}, and metro network expansion combining deep RL with MCTS. The closest neural-search method is MetroZero~\cite{alkilane2025metrozero}, but it selects metro expansion stations under budget constraints, whereas \systemname{} constructs complete bus route sets on existing shared roads and evaluates them through passenger and traffic simulation. However, to the best of our knowledge, no prior work on bus TRNDP brings these three strands together. Learned policy and value priors guide search, MCTS-based lookahead refines action selection, and simulation-based evaluation supplies the terminal reward signal. \systemname{} integrates all three by training policy and value networks against targets produced by MCTS, with rewards drawn from mesoscopic traffic simulation on a city-scale road network~\cite{seo2025uxsim}.

%% file: sections/method.tex
\section{\systemname{}}
\label{sec:method}

\subsection{Problem Formulation}
We model the road network as an undirected graph $G=(V,E)$, where $V=\{v_1,\ldots,v_n\}$ denotes nodes such as intersections and $E$ denotes bidirectional road segments, each with length $\ell_e>0$ and free-flow speed $c_e>0$. Travel demand is exogenous and is represented by an origin--destination matrix $D\in\mathbb{R}_{\ge 0}^{n\times n}$, where $D_{ij}$ gives the number of trips per hour from origin node $i$ to destination node $j$. The modal-split parameter $\alpha\in[0,1]$ assigns a fraction of total OD demand to bus transit, so that $D^{\mathrm{tr}}_{ij}=\alpha\,D_{ij}$. 

A transit route $r_k=(v_{k,1},\ldots,v_{k,L_k})$ is an ordered simple path on $G$: consecutive nodes must share an edge, i.e.\ $\{v_{k,q},v_{k,q+1}\}\in E$, and no node repeats within a route. Routes are operated bidirectionally, so buses traverse both $v_{k,1}\to\cdots\to v_{k,L_k}$ and its reverse. For each completed route set, bus passenger requests are evaluated on the induced transit graph. Passenger itineraries follow shortest-distance paths on this graph and may use one route or transfer across multiple routes to complete their journey. A complete transit design specifies an indexed $K$-tuple of routes, $\Pi=(r_1,\ldots,r_K)$, together with stop spacing and service frequencies:
\begin{equation}
\label{eq:design}
(\Pi,\mathbf{g},\boldsymbol{\mathcal{F}}) \;=\; \bigl((r_1,\ldots,r_K),\ (g_1,\ldots,g_K),\ (\mathcal{F}_1,\ldots,\mathcal{F}_K)\bigr),
\end{equation}
where $g_k\ge 1$ defines the stop spacing for route $k$, and $\mathcal{F}_k\in\mathbb{N}_{>0}$ is the service frequency in buses per hour. The tuple notation provides the index-wise correspondence between route $r_k$, spacing $g_k$, and frequency $\mathcal{F}_k$; the simulator reward is invariant to permutations of route order. Given $G$, $D$, and design constraints such as the route count $K$ and route length bounds, the Transit Route Network Design Problem seeks to maximize expected system-level performance:
\begin{equation}
\label{eq:trndp}
(\Pi^\star,\mathbf{g}^\star,\boldsymbol{\mathcal{F}}^\star) \;\in\; \argmax_{(\Pi,\mathbf{g},\boldsymbol{\mathcal{F}})\in\mathcal{X}} \;\mathbb{E}_{\xi}\!\left[\mathcal{R}\bigl(\Pi,\mathbf{g},\boldsymbol{\mathcal{F}};\xi\bigr)\right],
\end{equation}
where $\mathcal{X}$ is the feasible design set, $\xi$ denotes simulator stochasticity, and $\mathcal{R}(\cdot;\xi)$ is a traffic-simulation performance measure. We fix the stop spacing to $g_k=1$ for all routes and assign frequencies using a max-load rule (Appendix~\ref{app:frequency}); thus, the learned decision variables reduce to the route tuple~$\Pi$.
% where $\mathcal{X}$ is the feasible design set, $\xi$ denotes simulator stochasticity, and $\mathcal{R}(\cdot;\xi)$ is a performance measure evaluated through traffic simulation. We fix the stop spacing to $g_k=1$ for all routes and assign frequencies using a max-load rule (Appendix~\ref{app:frequency}); thus, the learned decision variables reduce to the route tuple~$\Pi$.

We cast the route construction as a finite-horizon Markov decision process $(\mathcal{S},\mathcal{A},P,R_T,s_0)$, where $\mathcal{S}$ is the state space, $\mathcal{A}=\{1,\ldots,n\}$ is the action space with infeasible actions masked at each step, $P$ is the transition function, and $s_0$ is the initial state. Since stop spacing and frequencies are deterministic once $\Pi$ is chosen, the terminal reward is given by 
\begin{equation}
\label{eq:reduced_reward}
R_T(\Pi;\xi) \;=\; \mathcal{R}\bigl(\Pi,\,\mathbf{1},\,\boldsymbol{\mathcal{F}}(\Pi);\,\xi\bigr),
\end{equation}
Here $\boldsymbol{\mathcal{F}}(\Pi)=F^{\mathrm{ml}}(\Pi)$ is the max-load frequency projection defined in Appendix~\ref{app:frequency}. For a completed route set, this projection is the componentwise minimal positive frequency vector satisfying fixed-load capacity constraints. Appendix~\ref{app:frequency} proves this projection property and defines the corresponding gap between the projected route-only objective and joint route-frequency optimization.
Let $\mu_\theta$ be the executed construction policy. For End-to-End RL, $\mu_\theta(a\mid s)=\pi_\theta(a\mid s)$; for \systemname{}, $\mu_\theta$ is the MCTS policy induced by the policy-value network $f_\theta$. The learning objective is
\begin{equation}
\label{eq:objective}
J(\theta)\;=\;\mathbb{E}_{\Pi\sim\mu_\theta,\,\xi}\bigl[R_T(\Pi;\xi)\bigr],
\end{equation}
where $\Pi$ is the complete route set constructed under $\mu_\theta$. In each episode, the agent sequentially constructs $K$ routes, each satisfying $|r_k|\le L_{\max}$, by extending one route at a time. In our setting, $K=16$ and $L_{\max}=14$. Routes start at a transit-center hub and grow one node at a time from the frontier, the most recently appended node, with feasible actions restricted to one-hop neighbors not yet visited in the current route. The policy has no separate stop action; a route is finalized automatically when it reaches $L_{\max}$ or when no valid extension exists. Thus, the learned decision horizon is bounded by $T_{\max}=K(L_{\max}-1)$ and may be shorter when routes terminate early.

\textbf{State.}\; At each step $t$, the state encodes the road graph and the transit network constructed so far, including completed routes, the current partial route, and the frontier node. The policy network receives this as $s_t=(X_t,\mathcal{I},Z)$, where $\mathcal{I}$ is a directed edge list, $Z$ stores edge features such as length and free-flow speed, and $X_t\in\mathbb{R}^{n\times d_x}$ contains per-node features capturing spatial attributes, OD demand aggregates, route membership, and frontier status. The full specification is in Appendix~\ref{app:state}.
% \caption{\systemname{} policy-value network. Graph state consists of node features, edge connectivity, and edge attributes. Node features are projected to embeddings, then passed through a GATv2 backbone. Successive attention blocks produce multi-hop representations, which Jumping Knowledge aggregation concatenates and projects to node embeddings. A node-wise actor MLP produces logits, from which infeasible actions are masked and remaining logits are normalized. A graph-level critic pools node embeddings using global mean and max pooling and predicts the scalar value $V$. \systemname{} uses the resulting policy $P$ as MCTS action priors and $V$ as the leaf-value estimate.}
\begin{figure*}[t!]
    \centering
    \includegraphics[width=0.98\linewidth]{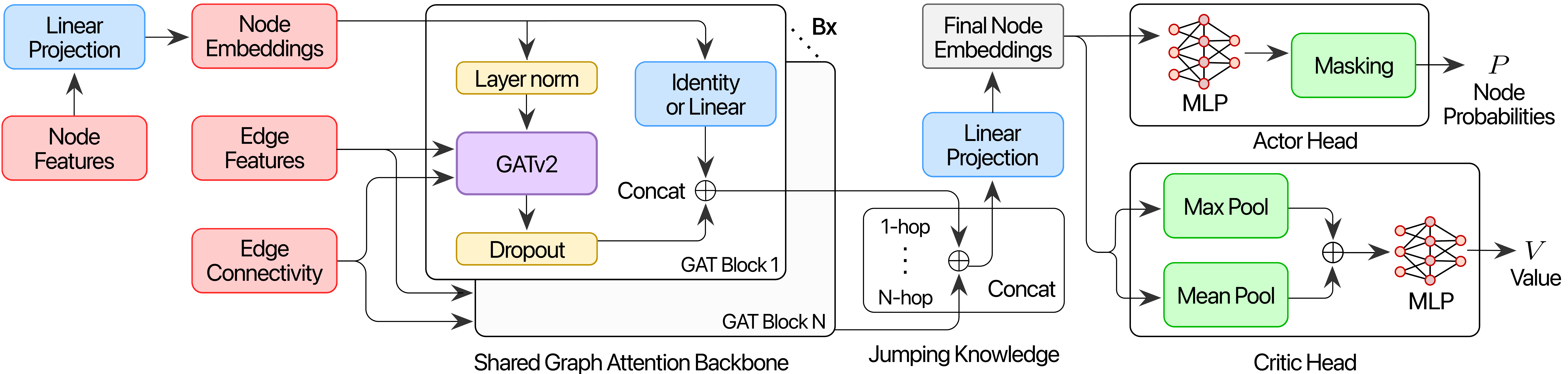}
    \vspace{-4pt}
    \caption{\systemname{} policy-value network. Node features are projected to embeddings, then passed through a GATv2 backbone using edge connectivity and edge attributes. Successive attention blocks produce multi-hop representations, which Jumping Knowledge aggregation concatenates and projects to node embeddings. A node-wise actor MLP produces logits, from which infeasible actions are masked. A graph-level critic uses global mean and max pooling and predicts $V_\theta(s)$. \systemname{} uses $P_\theta(\cdot\mid s)$ as MCTS action priors and $V_\theta(s)$ as the leaf-value estimate.}
    \vspace{-4pt}
    \label{fig:policy_network}
\end{figure*}
\textbf{Action.}\;
At each step the agent selects $a_t\in\mathcal{A}=\{1,\ldots,n\}$, with $a_t=i$ appending $v_i$ to the current route. Let $v_{f_t}$ be the frontier node of partial route $r_{k,t}$. The admissible candidate set is
\begin{equation}
\label{eq:candidate_set}
\mathcal{C}_t
\;=\;
\bigl\{
i\in\{1,\ldots,n\}:
\{v_{f_t},v_i\}\in E
\;\text{and}\;
v_i\notin r_{k,t}
\bigr\},
\end{equation}
so every valid action extends the route by one hop while preserving the simple-path constraint, and $\mathcal{C}_t=\varnothing$ triggers early termination. Because $\mathcal{C}_t$ is restricted to unvisited one-hop neighbors of the frontier and road graphs are sparse, typically $|\mathcal{C}_t|\ll n$. We therefore apply invalid-action masking~\cite{huang2020closer}: a binary mask $m_t\in\{0,1\}^n$ with $m_t[i]=\mathbf{1}\{i\in\mathcal{C}_t\}$ restricts the policy to feasible extensions. Given logits $\ell_\theta(s_t)\in\mathbb{R}^n$, the masked policy, for $\mathcal{C}_t\neq\varnothing$, is
\begin{equation}
\label{eq:masked_policy}
\pi_{\theta}(a_t{=}i\mid s_t)
\;=\;
\begin{cases}
\dfrac{\exp(\ell_i)}
{\sum_{j:\,m_t[j]=1}\exp(\ell_j)}
& \text{if } m_t[i]=1,\\[8pt]
0
& \text{if } m_t[i]=0.
\end{cases}
\end{equation}
Node feasibility is also exposed in $X_t$ via a valid-next flag, allowing the policy to observe which actions are admissible while the mask enforces zero probability on invalid actions.

% Since $D^{\mathrm{tr}}_{ij}=\alpha D_{ij}$, $\Psi$ is upper-bounded by $\alpha$.
\textbf{Reward.}\;
For a complete route design $\Pi=(r_1,\ldots,r_K)$, we assign frequencies $\boldsymbol{\mathcal{F}}(\Pi)$ via the max-load rule and evaluate the design through simulation. The terminal reward for training is
\begin{equation}
\label{eq:reward}
R_T(\Pi;\xi) = b_0\,\Psi + b_1\,\rho - b_2\,\hat{t}_{\mathrm{wait}} - b_3\,\hat{t}_{\mathrm{move}} - b_4\,\omega - b_5\,\tfrac{N_{\mathrm{bus}}}{K} + b_6\,u,
\end{equation}
with $b_0{=}60,\;b_1{=}45,\;b_2{=}20,\;b_3{=}10,\;b_4{=}10,\;b_5{=}2,\;b_6{=}12$, chosen to balance passenger and operator objectives; planners with different goals can set different weights. The coverage potential $\Psi$ is the fraction of total OD demand assigned to transit and reachable under $\Pi$. Let $G_\Pi=(V_\Pi,E_\Pi)$ be the graph induced by $\Pi$, with $V_\Pi$ as served nodes and $E_\Pi$ as edges. Then
\begin{equation}
\label{eq:coverage}
\Psi
\;=\;
\frac{\sum_{(i,j)\in\mathcal{P}_{\mathrm{reach}}} D^{\mathrm{tr}}_{ij}}
{\sum_{i,j\in V} D_{ij}},
\qquad
\mathcal{P}_{\mathrm{reach}}
=
\{(i,j): i,j\in V_\Pi \text{ and } j \text{ is reachable from } i \text{ in } G_\Pi\}.
\end{equation}
The service term is $\rho=N_{\mathrm{boarded}}/N_{\mathrm{OD}}$, where $N_{\mathrm{boarded}}$ counts passengers who complete trips or are onboard at the simulation horizon and $N_{\mathrm{OD}}$ is citywide OD demand over that horizon. This fixed-denominator reward term differs from the reported service rate $\sigma=N_{\mathrm{boarded}}/N_{\mathrm{want}}$ in Appendix~\ref{app:metrics}. Let $\bar{t}_{\mathrm{wait}}$ and $\bar{t}_{\mathrm{move}}$ denote raw average waiting and in-vehicle times in minutes over served passengers. The reward uses capped normalized time penalties,
\[
\hat{t}_{\mathrm{wait}}=\min\{\bar{t}_{\mathrm{wait}}/30,1\},\qquad
\hat{t}_{\mathrm{move}}=\min\{\bar{t}_{\mathrm{move}}/40,1\}.
\]
The remaining terms are route overlap ratio $\omega$, fleet size $N_{\mathrm{bus}}$, and bus utilization $u$. The scalar terminal reward is separately normalized online when forming the value target~\cite{huang202237, rl_bag_of_tricks}.
% \begin{figure*}[t!]
%     \centering
%     \includegraphics[width=0.98\linewidth]{figures/final_learning_overview_alpha_0_3.pdf}
%     \vspace{-6pt}
%     \caption{Learning dynamics and search cost under mixed demand ($\alpha=0.3$). \textbf{LEFT}: Curves show averages over two training seeds per reward mode. Reward shaping is critical for End-to-End RL. Early Stopping (ES) penalizes routes that terminate before $L_{\max}$, while Delta-coverage ($\Delta$) rewards only newly covered demand. The $\Delta$ variants are strongest; $\Delta$ + No ES is best. \textbf{MIDDLE}: MCTS search improves sample efficiency under the same environment-step budget. At search depth $N_{\mathrm{iter}}=500$, \systemname{} surpasses End-to-End RL around $3\times10^5$ steps and finishes with a final smoothed reward $1.31$ versus $-4.02$. \textbf{RIGHT}: Learned policy and value estimates make MCTS search practical across search depths $N_{\mathrm{iter}}\in[100,500]$, when benchmarked on the Bloomington network with a single CPU worker. \systemname{} stays within seconds per decision, whereas Pure MCTS requires hundreds to thousands of seconds per decision.}
%     \vspace{-4pt}
%     \label{fig:learning_overview_0_3}
% \end{figure*}
\subsection{Search-Guided Reinforcement Learning}
\label{sec:alphatransit}
Sequential transit design provides a sparse training signal because node extensions are evaluated only after route designs are completed, frequencies are assigned, and passenger flows are simulated. Local construction policies, including heuristics and end-to-end RL, select each node without looking ahead to downstream network effects at decision time; these effects are observed only after design evaluation. \systemname{} runs Monte Carlo Tree Search (MCTS) at each construction state, guided by the policy-value network $f_\theta$. The visit counts define an MCTS policy used both to sample the next node and to train the actor. This makes \systemname{} a search-guided reinforcement learning method.

 % Top panels show training progress, and bottom panels show top-$5$ reward versus compute budget.
\begin{figure*}[t!]
    \centering
    \includegraphics[width=0.98\linewidth]{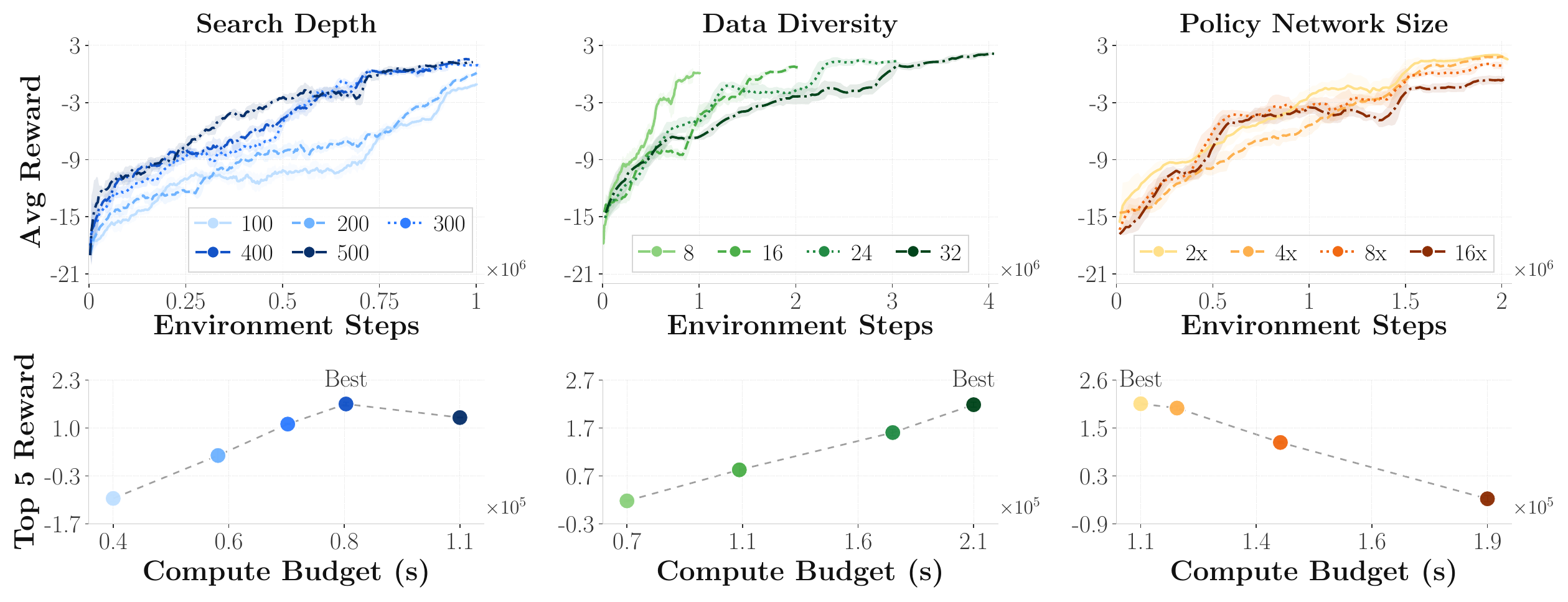}
    \vspace{-6pt}
    \caption{\systemname{} scaling behavior under mixed demand ($\alpha=0.3$). Search depth, data diversity, and policy size show different quality--compute trade-offs. \textbf{LEFT}: Performance peaks at $N_{\mathrm{iter}}=400$ with top-$5$ reward $1.67$, rather than increasing monotonically, while $N_{\mathrm{iter}}=500$ lowers reward to $1.29$ with higher runtime. \textbf{MIDDLE}: Increasing episodes per iteration gives the highest reward, $2.20$ at $32$ episodes, but costs the most compute. \textbf{RIGHT}: Policy-size labels ($2\times$-$16\times$) denote the GAT-block repetition count; the $2\times$ policy reaches $2.05$, while the $16\times$ policy falls below $0$.}
    \vspace{-4pt}
    \label{fig:scaling_behavior_0_3}
\end{figure*}
% \caption{\systemname{} scaling behavior under mixed demand ($\alpha=0.3$). Search depth, data diversity, and policy size show different quality--compute profiles. Search peaks at $N_{\mathrm{iter}}=400$, reaching top-$5$ reward $1.67$, while $N_{\mathrm{iter}}=500$ lowers reward to $1.29$ with higher runtime. Increasing episodes per iteration gives the highest reward, $2.20$ at $32$ episodes per iteration, but incurs the largest compute cost. Policy size ($2\times$-$16\times$) indicate GAT-block repetition factors, $2\times$ policy reaches $2.05$, while the $16\times$ policy falls to $-0.29$.}

Fig.~\ref{fig:policy_network} shows $f_\theta$ mapping graph state $s=(X,\mathcal{I},Z)$ to $f_\theta(s)=(P_\theta(\cdot\mid s),V_\theta(s))$, where $P_\theta$ gives node-action probabilities and $V_\theta$ estimates the value of completing a design from $s$. The network projects node features and processes them with a shared GATv2~\cite{brody2021attentive} backbone using $\mathcal{I}$ and $Z$. The actor and critic share this backbone. For fixed hidden widths, attention heads, and block count $B$, the parameter count is independent of $n$, while a forward pass costs $O(B(n+|\mathcal{I}|))$, linear in $n$ for sparse road graphs. Jumping Knowledge aggregation~\cite{xu2018jk} gives both heads access to multiple receptive-field scales. Let $\mathbf{h}_v^{(b)}$ denote the representation of node $v$ after block $b$. The block outputs are concatenated and projected to a node embedding: 
\begin{equation}
    \mathbf{z}_v = W_{\mathrm{JK}}\left[\mathbf{h}_v^{(1)}\,\Vert\, \cdots \,\Vert\, \mathbf{h}_v^{(B)}\right]+\mathbf{b}_{\mathrm{JK}},
    \label{eq:jk}
\end{equation}
where $W_{\mathrm{JK}}\in\mathbb{R}^{d\times\sum_{b=1}^{B}d_b}$ maps the concatenation to dimension $d$. The actor applies the same MLP to each $\mathbf{z}_v$, producing one logit per node, then masks infeasible logits via Eq.~\ref{eq:masked_policy}. The critic concatenates global mean and max pooled node embeddings and maps the pooled representation to $V_\theta(s)$. Appendix~\ref{app:policy} provides architecture details.

% \textbf{Monte Carlo Tree Search.}\; 
At each construction state $s_t$, \systemname{} runs an MCTS rooted at $s_t$ over admissible route-extension actions. For any tree state $s$, let $\mathcal{C}(s)$ be the candidate set from Eq.~\ref{eq:candidate_set}; at the root, $\mathcal{C}(s_t)=\mathcal{C}_t$. Each edge $(s,a)$, with $a\in\mathcal{C}(s)$, stores $N(s,a)$, $W(s,a)$, $Q(s,a)$, and prior $P_\theta(a\mid s)$, where $Q(s,a)=W(s,a)/N(s,a)$ if $N(s,a)>0$ and $Q(s,a)=0$ otherwise. Leaf states are evaluated by $V_\theta$, avoiding simulator calls inside the tree. Each simulation then proceeds through:
\vspace{-2pt}
\begin{itemize}[leftmargin=12pt]
    \item Selection:\;
    Starting at the root, each simulation selects actions using the PUCT rule~\cite{rosin2011multi}, which combines the current value estimate with an exploration bonus:
    \begin{equation}
    a^\star = \mathop{\argmax}_{a\in\mathcal{C}(s)} \left[ Q(s,a) + c\,P_\theta(a\mid s) \frac{\sqrt{1+\sum_{b\in\mathcal{C}(s)}N(s,b)}}{1+N(s,a)} \right].
    \label{eq:puct}
    \end{equation}
    Here, $c$ balances exploitation of high-value actions with exploration of less-visited ones.

    \item Expansion and evaluation:\;
    When selection reaches an unexpanded leaf $s_{\mathrm{leaf}}$, a single forward pass of $f_\theta$ produces priors $P_\theta(a\mid s_{\mathrm{leaf}})$ for each $a\in\mathcal{C}(s_{\mathrm{leaf}})$ and a scalar value $V_\theta(s_{\mathrm{leaf}})$ used as the backed-up leaf value. No simulator rollout is performed inside the tree; the simulator is invoked only after the executed trajectory completes a full route set.
    
    \item Backpropagation:\;
    After evaluation, statistics are updated along the path from leaf to root: $N(s,a)\leftarrow N(s,a)+1$, $W(s,a)\leftarrow W(s,a)+V_\theta(s_{\mathrm{leaf}})$, and $Q(s,a)\leftarrow W(s,a)/N(s,a)$.
\end{itemize}
After $N_{\mathrm{iter}}$ simulations, the action distribution at the root is derived from visit counts:
\begin{equation}
\pi_t(a\mid s_t) = \frac{N(s_t,a)^{1/\tau}}{\sum_{b\in\mathcal{C}(s_t)} N(s_t,b)^{1/\tau}}, \qquad a\in\mathcal{C}(s_t),
\label{eq:mcts_policy}
\end{equation}
where $\tau$ is the temperature parameter. During training, the next action is sampled from $\pi_t$, and the pair $(s_t,\pi_t)$ is stored for the episode. Once the full route set $\Pi$ is constructed, the simulator returns the terminal reward $z=R_T(\Pi;\xi)$, which is normalized online to form the value target $\tilde{z}$ and paired with each stored state. The network is trained from samples $(s,\pi,\tilde{z})$ by minimizing
\begin{equation}
\mathcal{L}(\theta) = \mathbb{E}_{(s,\pi,\tilde{z})} \left[ - \sum_{a\in\mathcal{C}(s)} \pi(a\mid s)\log P_\theta(a\mid s) + \bigl(V_\theta(s)-\tilde{z}\bigr)^2 \right].
\label{eq:alphatransit_loss}
\end{equation}
The first term distills MCTS visit counts into the actor, and the second term trains the critic to predict the normalized terminal outcome. The full training algorithm is provided in Appendix~\ref{app:algorithm}.

%% file: sections/experiment.tex
\section{Experiment}
\label{sec:experiment}
\textbf{City Networks and Demand Data.}\;
We introduce a new real-world transit design benchmark dataset that includes: (i) a topologically correct road graph with $143$ nodes and $243$ bidirectional edges, extracted from the Bloomington street network (${\sim}152.3$\,km$^2$); (ii) a block-level OD demand matrix derived from U.S. Census data; and (iii) the $16$ existing real-world transit routes~\cite{bloomingtontransit_gtfs}. Unlike synthetic benchmarks, our dataset provides realistic road topology and spatially heterogeneous travel demand. We train and evaluate \systemname{} on this network. Processing details are in Appendix~\ref{app:bloomington}. To evaluate cross-city generalization, we test policies trained on the Bloomington network on a larger Laval network from Holliday et al.~\cite{holliday2025learning} (${\sim}256$\,km$^2$ with $632$ nodes and $1{,}971$ edges), using the same design parameters, $K{=}16$ and $L_{\max}{=}14$. Additional details are in Appendix~\ref{app:laval}.

\begin{figure*}[t!]
    \centering
    \includegraphics[width=0.98\linewidth]{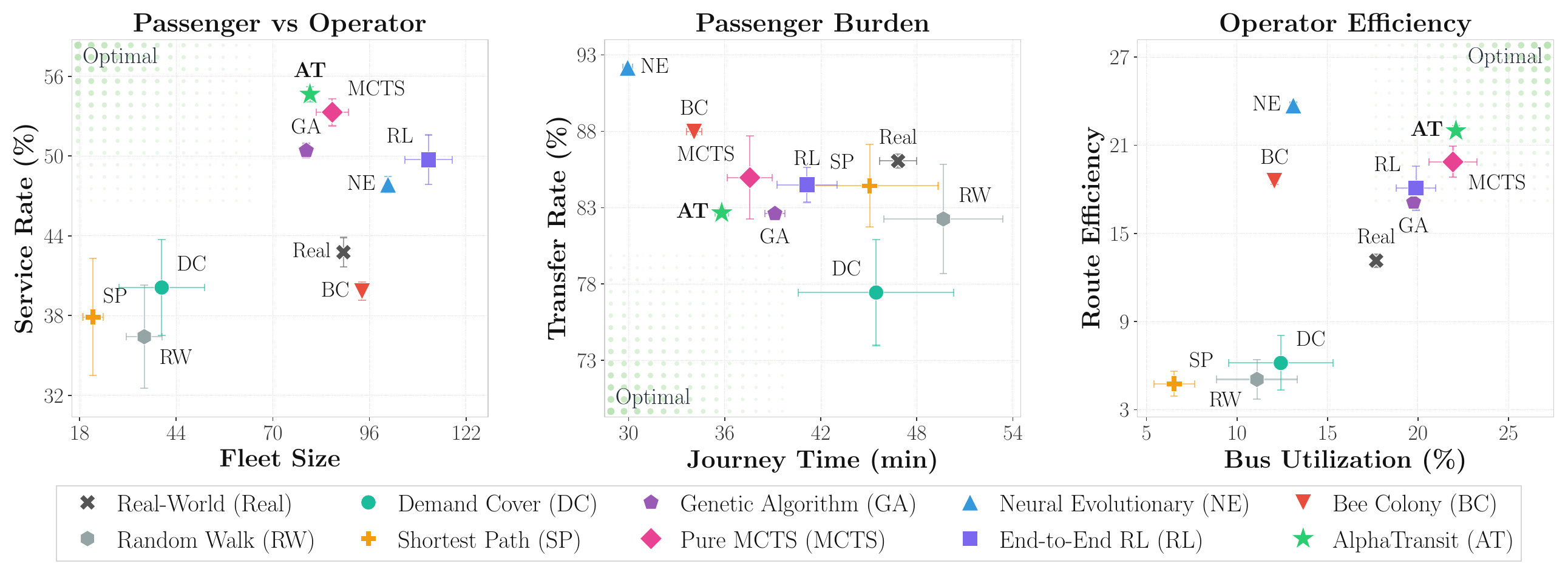}
    \vspace{-8pt}
    \caption{Mixed-demand results on the Bloomington benchmark ($\alpha=0.3$). Points show means; error bars denote $\pm 1$ standard deviation. In each panel, green overlay and Optimal label mark improvement direction: upper-left for service rate versus fleet size, lower-left for journey time versus transfer rate, and upper-right for bus utilization versus route efficiency. \textbf{LEFT}: \systemname{} achieves highest service rate, $54.64\%$, with fleet size $80$. \textbf{MIDDLE}: \systemname{} obtains $35.81$ minutes of journey time and an $82.66\%$ transfer rate. \textbf{RIGHT}: \systemname{} achieves highest bus utilization, $22.10\%$, and the second-highest route efficiency, $21.99$. Together, the panels show that \systemname{} pairs the highest service rate and bus utilization with a competitive passenger-burden trade-off.}
    \vspace{-4pt}
    \label{fig:comparison_0_3}
\end{figure*}

\textbf{Setup.}\;
% \subsection{Setup}
We run simulations in UXsim~\cite{seo2025uxsim}, a mesoscopic traffic simulator based on Newell's car-following model~\cite{newell2002simplified}. Its mesoscopic resolution captures congestion propagation while remaining tractable for iterative optimization: by aggregating vehicles into platoons of size $\Delta n=5$, UXsim runs 30--60$\times$ faster than microscopic simulators such as SUMO~\cite{lopez2018microscopic}. Dynamics advance at $\Delta t=1$\,s for $T_{\text{sim}}=10{,}000$ steps ($\approx 2.7$ hours), covering a representative morning peak~\cite{cambridge2005congestion}. We extend UXsim with a training environment that handles bus dispatch, boarding, alighting, and modal split. Buses operate with $40$-passenger capacity and $60$\,s dwell time per stop, while $\alpha$ allocates OD demand between bus and car modes. We train \systemname{} on Bloomington for approximately $1$M environment steps using parallel episode workers and $N_{\mathrm{iter}}=500$ MCTS simulations per decision in the final comparisons. Unless stated otherwise, all \systemname{} and Pure MCTS final comparisons use this search budget. Each episode constructs $16$ routes, grown node by node until reaching $L_{\max}=14$ stops or no valid one-hop extension remains. All routes begin at the transit center hub, matching real-world Bloomington Transit. We consider two modal splits: $\alpha=1.0$, which assigns all served demand to bus transit, and $\alpha=0.3$, which reflects a typical urban public-transport share~\cite{buehler2019verkehrsverbund}. We train one \systemname{} policy per scenario. All completed route sets are evaluated through the same UXsim reporting pipeline, with frequencies assigned by the same max-load rule and metrics derived from Eq.~\ref{eq:reward}. Training, hyperparameters, and baseline details are in Appendix~\ref{app:training}. We compare against the following baselines:
\begin{figure*}[t!]
    \centering
    \includegraphics[width=0.98\linewidth]{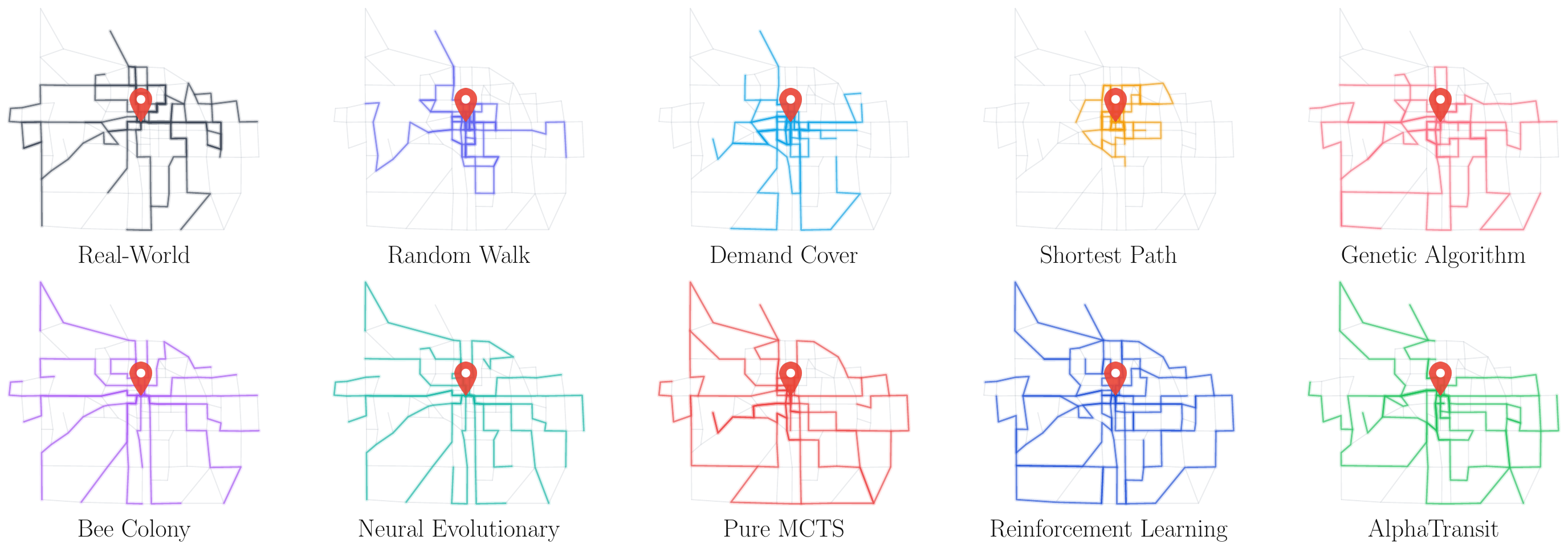}
    \vspace{-6pt}
    \caption{Selected route designs under mixed demand ($\alpha=0.3$) on the Bloomington network, with the red pin marking the transit center. Each panel overlays a method's routes on the same street basemap. \systemname{} covers $117$ nodes ($81.8\%$), has $24.4\%$ shared-edge overlap, and spans $120.8$ km. Real-World covers $114$ nodes ($79.7\%$), has $34.8\%$ overlap, and spans $115.6$ km, while End-to-End RL covers $138$ nodes ($96.5\%$), has $19.0\%$ overlap, and spans $142.3$ km. \systemname{} is therefore more targeted than End-to-End RL.}
    \label{fig:routes_0_3}
    \vspace{-6pt}
\end{figure*}
\vspace{-2pt}
\begin{itemize}[leftmargin=12pt]
    \setlength{\itemsep}{0pt}
    \setlength{\parskip}{0pt}
    \item Real-World: The $16$ bus routes operated by Bloomington Transit~\cite{bloomingtontransit_gtfs}. The agency's design reflects broader objectives such as equity, coverage, and budget that extend beyond our simulator-defined criteria; we include it as a real-world reference.
    
    \item Random Walk: Samples next node uniformly from the admissible candidate set $\mathcal{C}_t$.
    
    \item Demand Coverage: Samples next node $i \in \mathcal{C}_t$ proportional to its demand interaction with the partial route, $s(i) = \sum_{j \in r_k} (D_{ij} + D_{ji})$.
    
    \item Shortest Path: Samples next node $i \in \mathcal{C}_t$ inversely proportional to edge length, $s(i) = 1/\ell_{u,i}$, where $u$ is the frontier and $\ell_{u,i}$ is the length of edge $\{u, i\}$.
    
    \item Genetic Algorithm: A widely used metaheuristic for transit network design~\cite{fan2006optimal,nayeem2014transit}. Each individual represents a complete route tuple $\Pi=(r_1,\ldots,r_K)$ that evolves over generations through tournament selection, route-exchange crossover, and path-regeneration mutation.
    
    \item Bee Colony Optimization: A swarm-intelligence metaheuristic~\cite{nikolic2013transit} that evolves a population of route sets through two mutations: (i) demand-weighted shortest-path route replacement and (ii) node addition/deletion at route endpoints. We adapt the implementation from~\cite{holliday2025learning}, preserve the analytical objective during route generation, and evaluate the completed routes through UXsim.
    
    \item Neural Evolutionary Algorithm: Augments Bee Colony Optimization with mutations proposed by a graph neural policy trained via RL~\cite{holliday2024neural,holliday2025learning}. We train the policy under our setup, preserving the original analytical objective during search, and evaluate the completed routes through UXsim.

    \item Pure Monte Carlo Tree Search~\cite{coulom2006efficient,browne2012survey}: Combines uniform action priors $P(s,a)=1/|\mathcal{C}_t|$ and full UXsim rollouts. The PUCT formula (Eq.~\ref{eq:puct}) and search budget match \systemname{}.
    
    \item End-to-End RL: A direct policy-learning baseline trained with PPO~\cite{schulman2017proximal} and GAE~\cite{schulman2015high}. It selects actions from the masked policy $\pi_\theta(a \mid s_t)$ without decision-time search. It uses the same state, action mask, terminal reward, and policy architecture as \systemname{}, plus lightweight shaping rewards with $\gamma=0.999$ to make long horizon policy gradient training feasible.
\end{itemize}

% Cross-city evaluation on the Laval network. Under full transit demand ($\alpha=1.0$), \systemname{} reaches $90.72\%$ service rate, compared with $55.03\%$ for End-to-End RL, and also obtains the lowest wait time and highest route efficiency among the transfer-compatible methods. Under mixed demand ($\alpha=0.3$), \systemname{} remains close to Shortest Path in service rate while giving the lowest wait time. Demand Cover and Shortest Path are non-learned reference heuristics; target-instance optimizers are omitted because they do not evaluate transfer of a fixed learned policy. Values report mean $\pm$ standard deviation over $10$ seeds.
% We evaluate transit network designs along three axes grounded in standard transit evaluation criteria~\cite{vuchic2007urban, vuchic2017urban}. The first axis weighs passenger benefit against operator cost: service rate (\% of potential demand boarded) versus fleet size. The second axis groups two passenger-experience metrics: total journey time (minutes), measured as elapsed passenger time including waiting and in-vehicle movement, and transfer rate (\% of trips requiring a transfer). The third axis groups two operator-side efficiency metrics: route efficiency (passengers served per km of route) and bus utilization (\% of bus capacity occupied). Full definitions are provided in Appendix~\ref{app:metrics}.

We evaluate network designs across three axes from transit evaluation criteria~\cite{vuchic2007urban, vuchic2017urban}. The first weighs passenger benefit against operator cost: service rate (\% of potential demand boarded) vs. fleet size. Here, potential demand refers to the reachable fixed-transit-demand denominator $N_{\text{want}}$ defined in Appendix~\ref{app:metrics}. The second groups two passenger-experience metrics: total journey time (minutes), elapsed passenger time including waiting and in-vehicle movement, and transfer rate (\% of trips requiring a transfer). The third groups two operator-side efficiency metrics: route efficiency (passengers served per km of route) and bus utilization (\% of bus occupied). Full definitions are provided in Appendix~\ref{app:metrics}.

\begin{table*}[t!]
  \centering
  % \scriptsize
  \renewcommand{\arraystretch}{1.15}
  \resizebox{0.99\textwidth}{!}{%
    \begin{tabular}{cl ccccccc}
        & & \multicolumn{4}{c}{Passenger Metrics} & \multicolumn{3}{c}{Operator Metrics} \\
        \cmidrule(lr){3-6} \cmidrule(lr){7-9}
        & Method
        & \makecell{Service \\ Rate (\%) $\uparrow$}
        & \makecell{Wait \\ Time (min) $\downarrow$}
        & \makecell{Transfer \\ Rate (\%) $\downarrow$}
        & \makecell{Journey \\ Time (min) $\downarrow$}
        & \makecell{Route \\ Efficiency $\uparrow$}
        & \makecell{Fleet \\ Size $\downarrow$}
        & \makecell{Bus \\ Util. (\%) $\uparrow$} \\
        \midrule
        \multirow{4}{*}{\rotatebox{90}{$\alpha = 0.3$}}
        & Demand Cover & $73.80 {\scriptstyle \pm 5.40}$ & $15.93 {\scriptstyle \pm 2.89}$ & $49.73 {\scriptstyle \pm 4.64}$ & $41.03 {\scriptstyle \pm 3.71}$ & $103.30 {\scriptstyle \pm 18.39}$ & $96.10 {\scriptstyle \pm 16.23}$ & $38.03 {\scriptstyle \pm 3.93}$ \\
        & Shortest Path & $90.72 {\scriptstyle \pm 7.53}$ & $9.91 {\scriptstyle \pm 3.12}$ & $57.48 {\scriptstyle \pm 4.03}$ & $29.62 {\scriptstyle \pm 2.80}$ & $129.85 {\scriptstyle \pm 37.78}$ & $72.20 {\scriptstyle \pm 21.67}$ & $23.77 {\scriptstyle \pm 3.63}$ \\
        & End-to-End RL & $88.96 {\scriptstyle \pm 2.01}$ & $7.38 {\scriptstyle \pm 0.71}$ & $58.91 {\scriptstyle \pm 1.12}$ & $39.99 {\scriptstyle \pm 1.21}$ & $225.69 {\scriptstyle \pm 9.93}$ & $277.00 {\scriptstyle \pm 0.00}$ & $47.34 {\scriptstyle \pm 3.14}$ \\
        & AlphaTransit & $89.25 {\scriptstyle \pm 0.84}$ & $7.07 {\scriptstyle \pm 0.54}$ & $57.70 {\scriptstyle \pm 0.70}$ & $38.19 {\scriptstyle \pm 1.00}$ & $200.87 {\scriptstyle \pm 4.77}$ & $241.00 {\scriptstyle \pm 0.00}$ & $42.14 {\scriptstyle \pm 1.18}$ \\
        \midrule
        \multirow{4}{*}{\rotatebox{90}{$\alpha = 1.0$}}
        & Demand Cover & $62.00 {\scriptstyle \pm 7.96}$ & $12.41 {\scriptstyle \pm 4.65}$ & $41.28 {\scriptstyle \pm 5.80}$ & $45.37 {\scriptstyle \pm 6.01}$ & $256.48 {\scriptstyle \pm 45.16}$ & $270.90 {\scriptstyle \pm 32.43}$ & $45.87 {\scriptstyle \pm 4.34}$ \\
        & Shortest Path & $79.69 {\scriptstyle \pm 16.75}$ & $7.21 {\scriptstyle \pm 3.17}$ & $49.38 {\scriptstyle \pm 10.86}$ & $33.08 {\scriptstyle \pm 6.32}$ & $324.53 {\scriptstyle \pm 101.56}$ & $190.60 {\scriptstyle \pm 55.88}$ & $30.55 {\scriptstyle \pm 5.37}$ \\
        & End-to-End RL & $55.03 {\scriptstyle \pm 1.79}$ & $15.80 {\scriptstyle \pm 0.74}$ & $46.76 {\scriptstyle \pm 0.97}$ & $50.68 {\scriptstyle \pm 1.25}$ & $313.00 {\scriptstyle \pm 7.73}$ & $633.00 {\scriptstyle \pm 0.00}$ & $54.79 {\scriptstyle \pm 1.07}$ \\
        & AlphaTransit & $90.72 {\scriptstyle \pm 0.71}$ & $6.03 {\scriptstyle \pm 0.58}$ & $63.33 {\scriptstyle \pm 0.48}$ & $35.50 {\scriptstyle \pm 0.61}$ & $396.30 {\scriptstyle \pm 6.27}$ & $330.00 {\scriptstyle \pm 0.00}$ & $44.34 {\scriptstyle \pm 0.32}$ \\
        \bottomrule
    \end{tabular}%
  }
  \vspace{3pt}
  \caption{Cross-city transfer on Laval. Values report mean $\pm$ standard deviation over $10$ seeds; Laval-optimized learning/search methods are omitted to isolate fixed-policy transfer. Under full transit demand ($\alpha=1.0$), \systemname{} reaches $90.72\%$ service rate, versus $55.03\%$ for End-to-End RL, and also obtains the lowest wait time and highest route efficiency among transfer-compatible methods, giving the clearest advantage. Under mixed demand ($\alpha=0.3$), \systemname{} remains close to Shortest Path in service rate while giving the lowest wait time.}
  \label{tab:generalization_laval_compact}
  \vspace{-8pt}
\end{table*}

\textbf{Results.}\;
% \subsection{Results}
Fig.~\ref{fig:learning_overview_0_3} summarizes the learning dynamics and decision-time search cost under mixed demand. End-to-End RL relies on dense shaping to learn over the long route-construction horizon: the Delta-coverage variants outperform terminal-only and Raw + Early Stopping feedback, with Delta + No Early Stopping used in later comparisons. \systemname{} improves sample efficiency under the same environment-step budget because MCTS provides search-improved action targets, while learned policy-value estimates keep decision-time search in seconds rather than the hundreds to thousands of seconds required by Pure MCTS. Fig.~\ref{fig:scaling_behavior_0_3} focuses on \systemname{} scaling: more compute helps only when allocated carefully. Performance improves with search depth up to $N_{\mathrm{iter}}=400$ and drops at $500$; episodes per iteration raise top-5 reward from $0.19$ to $2.20$; larger GAT policies provide no gain.

Fig.~\ref{fig:comparison_0_3} gives the mixed-demand ($\alpha=0.3$) comparison on the Bloomington benchmark, and Appendix~\ref{app:results} reports the complete metric table and complementary analyses. Under mixed demand, \systemname{} achieves the highest service rate, $54.64\%$, with $80$ buses and the highest bus utilization, $22.10\%$. The same service-rate ranking holds under full transit demand ($\alpha=1.0$), where \systemname{} reaches $82.08\%$ service rate and also obtains the best wait time, route efficiency, and bus utilization. Since these metrics capture different passenger and operator objectives, no method dominates every axis: \systemname{} is strongest on service rate and utilization, while other methods can obtain lower transfer rates, shorter journey times, or smaller fleets in some settings.

To test whether the gain comes from combining learning with search, we compare against End-to-End RL, which removes decision-time search, and Pure MCTS, which uses the same search budget without learned prior-value estimates. Relative to End-to-End RL, \systemname{} improves service rate by $9.9\%$ at $\alpha=0.3$ and $11.4\%$ at $\alpha=1.0$; relative to Pure MCTS, it improves service rate by $2.5\%$ and $11.2\%$, respectively. At $\alpha=0.3$, Pure MCTS reaches $53.30\%$ with $86$ buses, whereas \systemname{} reaches $54.64\%$ with $80$ buses, suggesting search works best with learned estimates. Fig.~\ref{fig:routes_0_3} shows that the gain is not simply broader coverage: \systemname{} covers fewer nodes and less route distance than End-to-End RL ($117$ vs. $138$ nodes; $120.8$ km vs. $142.3$ km), yet serves more demand with a smaller fleet. Compared with Real-World routes, it has similar node coverage but reduces shared edge overlap ($24.4\%$ vs. $34.8\%$) and improves service rate from $42.77\%$ to $54.64\%$.

Finally, Table~\ref{tab:generalization_laval_compact} evaluates cross-city transfer on the larger Laval network using policies trained only on Bloomington. Under full transit demand ($\alpha=1.0$), \systemname{} reaches $90.72\%$ service rate, compared with $55.03\%$ for End-to-End RL, and also obtains the lowest wait time and highest route efficiency among the listed transfer-compatible methods. Under mixed demand ($\alpha=0.3$), Shortest Path has the highest service rate, while \systemname{} remains close at $89.25\%$ and gives the lowest wait time. The Laval results are mixed across metrics, but they show that the search-guided policy transfers to a larger network, with the clearest advantage in the higher-demand setting.

%% file: sections/conclusion.tex
\section{Conclusion and Discussion}
\label{sec:conclusion}

Transit route network design is a sequential combinatorial problem in which each route-extension decision is judged only after the full network is assembled and simulated. This delayed and nonlocal feedback makes TRNDP deceptive, since extensions that look useful locally can later create redundant overlap, transfer burden, or capacity bottlenecks. AlphaTransit addresses this challenge by coupling Monte Carlo Tree Search with a graph attention policy-value network. The policy proposes feasible route extensions, the value estimates downstream design quality, and search refines each decision without simulator rollouts inside the tree.

We also introduce a new Bloomington benchmark with realistic road topology, census-derived origin-destination demand, and existing transit routes. The results show that AlphaTransit achieves the highest service rate in both mixed and full transit demand, reaching $54.64\%$ at $\alpha=0.3$ and $82.08\%$ at $\alpha=1.0$. Relative to End-to-End Reinforcement Learning, AlphaTransit improves service rate by $9.9\%$ and $11.4\%$; relative to Pure Monte Carlo Tree Search, it improves service rate by $2.5\%$ and $11.2\%$. The scaling results further show that calibrated search depth and episode diversity matter more than simply enlarging the policy network. Together, these results establish learned lookahead as an effective mechanism for coordinating route construction under delayed, network-level evaluation. 

Several limitations of the current work exist. First, the geographic scope is limited: most experiments and model development rely on the Bloomington benchmark, so the results may not fully capture the diversity of transit networks across cities. The Laval experiment provides an initial cross-city check, but broader validation on additional metropolitan systems is needed. A second limitation is the route-start assumption. Every route begins at the transit-center hub, which matches Bloomington Transit and reduces the construction space, but may not fit multi-hub, crosstown, or grid-like networks. Finally, the simulator and decision model abstract several deployment factors: demand is represented by a static peak-hour OD matrix, and the reward does not explicitly encode equity, accessibility, reliability, budget, or robustness constraints.

Future work should extend training and evaluation with endogenous mode choice and to more cities and larger metropolitan networks, relax the transit-center start constraint, or learn route origins jointly with route extensions. Other promising extensions include time-varying demand, stochastic disruptions, a second-stage frequency policy, and constrained objectives that more clearly expose service-quality trade-offs across neighborhoods.

%% file: appendix_sections/search_space.tex
\section{Size of the search space}
\label{app:searchspace}
The scope of the design problem is to choose $K=16$ routes, each visiting $14$ distinct nodes, on the Bloomington network graph $G=(V, E)$. Computing the exact size of the search space is computationally intensive, so we estimate it by approximating the number of simple paths (no repeated nodes) consisting of $L=13$ edges. The graph $G$ has $|V|=143$ and $|E|=243$, which gives an average degree
\[
d \;=\; \frac{2|E|}{|V|} \;=\; \frac{486}{143} \;\approx\; 3.40.
\]
\noindent\textbf{Random initialization.}\;
For simple paths in sparse graphs, we approximate the number of candidate routes by accounting for the constraint that nodes cannot repeat. If each route may start from any of $|V|$ nodes, the first step has approximately $d$ choices, and each subsequent step has approximately $d-1$ choices (i.e., excluding the node just visited):
\[
P_{\mathrm{random}} \;\approx\; |V| \cdot d \cdot (d-1)^{L-1}, \qquad
S_{\mathrm{random}} \;=\; P_{\mathrm{random}}^{K}.
\]

Numerically,
\[
(d-1)^{12} = (2.40)^{12} \approx 3.65\times 10^{4},~
P_{\mathrm{random}} \approx 1.78\times 10^{7},
S_{\mathrm{random}} \approx 9.5\times 10^{115} \approx 10^{116}.
\]

\noindent\textbf{Transit-center initialization.}\;
In the experiments reported in this paper, all routes start from the transit-center hub. This removes the $|V|$ factor from each route count:
\[
P_{\mathrm{hub}} \;\approx\; d \cdot (d-1)^{L-1}, \qquad
S_{\mathrm{hub}} \;=\; P_{\mathrm{hub}}^{K}.
\]
Numerically,
\[
P_{\mathrm{hub}} \approx 1.24\times 10^{5},\qquad
S_{\mathrm{hub}} \approx 3.1\times 10^{81} \approx 10^{82}.
\]

% \noindent The magnitude of $S$ makes an exhaustive or near-exhaustive search infeasible and motivates alternative approaches that exploit the problem's structure. We note that this approximation does not adjust for overlapping edges between multiple routes.
\noindent Both counts are astronomically large; the transit-center-constrained space is the relevant one for our reported experiments, while the random initialization count describes the larger unconstrained initialization setting. The magnitude of these spaces makes exhaustive or near-exhaustive search infeasible, motivating alternative approaches that exploit the problem's structure. Note that this approximation does not adjust for overlapping edges between routes.

%%%%%%%%%%%%%%%%%%%%%%%%%%%%%%%%%%
% THIS NEEDS TO BE FAIR FOR ALL BASELINES> CANT LET RL JUST WIN BY INCREASING THIS.
%%%%%% NEW (UPDATED)

%% file: appendix_sections/frequency_assignment.tex
\section{Frequency of Service assignment}
\label{app:frequency}

Frequency of Service (FOS) setting is integral to the Transit Route Network Design Problem because it affects both passenger-facing performance, through waiting and in-vehicle movement times, and operator-side cost, through fleet requirements. In a sequential route-construction MDP, however, frequency decisions made for partial routes can provide unstable training targets. A two-node partial route may appear to benefit from high frequency because it serves a small local demand pair, while the same frequency may be inefficient once the route is extended and interacts with the rest of the network. We therefore assign frequencies after route construction using a deterministic max-load projection, following standard capacity-based frequency-setting principles~\cite{ceder2016public,furth1981setting}.

For each route $k$, let $Q^{\mathrm{norm}}_{k,e}(\Pi)$ denote the overlap-normalized segment passenger load rate, in passengers per hour, assigned to segment $e$ of route $k$, and define
\[
Q^{\mathrm{norm}}_{k,\max}(\Pi)
=
\max_{e \in r_k} Q^{\mathrm{norm}}_{k,e}(\Pi).
\]
For this pre-simulation projection, we build an undirected, unweighted route graph from the completed route segments. Each served OD pair contributes $\alpha D_{ij}$ trips per hour to a deterministic minimum-hop path on this graph; direct trips are paths contained on one route, while transfer trips switch routes at shared stops. Segment passenger load rates are accumulated along the selected path and then divided by the number of overlapping routes serving that segment, preventing overestimation of frequency requirements when multiple routes share the same road segment. UXsim then simulates passenger assignment, waiting, boarding, transfers, and traffic dynamics with the resulting frequencies fixed; the projection is not recomputed from realized simulated loads. 

The max-load frequency projection is
\begin{equation}
F^{\mathrm{ml}}_k(\Pi)
=
\max\left\{
1,
\left\lceil
\frac{Q^{\mathrm{norm}}_{k,\max}(\Pi)}
{\delta_{\max} C_k}
\right\rceil
\right\},
\label{eq:max_load_frequency}
\end{equation}
where $C_k$ is bus capacity and $\delta_{\max}$ is the maximum desired load factor. The paper's deterministic frequency vector is therefore $\boldsymbol{\mathcal{F}}(\Pi)=F^{\mathrm{ml}}(\Pi)$. Although the agent does not choose frequencies directly, it influences them through route construction: routes serving high-demand corridors receive higher frequencies, while overlapping routes split normalized segment load and may therefore receive lower frequencies.

\paragraph{Minimality of the max-load projection.}
For a fixed route set $\Pi$, define the capacity-feasible frequency set under fixed normalized segment loads as
\[
\mathcal{F}_{\mathrm{cap}}(\Pi)
=
\left\{
F \in \mathbb{N}_{>0}^{K} :
Q^{\mathrm{norm}}_{k,e}(\Pi)
\le
\delta_{\max} C_k F_k
\quad
\forall k,\; e \in r_k
\right\}.
\]

\begin{proposition}[Minimal fixed-load frequency projection]
\label{prop:minimal_fos_projection}
Fix a route set $\Pi$ and normalized segment loads $Q^{\mathrm{norm}}_{k,e}(\Pi)$ that do not change with the frequency vector $F$. Assume $C_k>0$ for every route $k$ and $\delta_{\max}>0$. Then the max-load rule $F^{\mathrm{ml}}(\Pi)$ is the componentwise minimal element of $\mathcal{F}_{\mathrm{cap}}(\Pi)$. Consequently, among all frequencies in $\mathcal{F}_{\mathrm{cap}}(\Pi)$, it minimizes any operator-frequency cost $C_{\mathrm{op}}(F)$ that is componentwise nondecreasing in $F$.
\end{proposition}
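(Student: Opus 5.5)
The plan is to show the feasible set decouples across routes, characterize each route's feasible frequencies as an up-set of integers, and identify its least element with Eq.~\eqref{eq:max_load_frequency}.

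\textbf{Step 1: decoupling.} Because the normalized loads $Q^{\mathrm{norm}}_{k,e}(\Pi)$ do not depend on $F$, each constraint in $\mathcal{F}_{\mathrm{cap}}(\Pi)$ involves only the single coordinate $F_k$. Hence $\mathcal{F}_{\mathrm{cap}}(\Pi)=\prod_{k=1}^K \mathcal{F}_k$, where
\[
\mathcal{F}_k=\{F_k\in\mathbb{N}_{>0}: Q^{\mathrm{norm}}_{k,e}(\Pi)\le \delta_{\max}C_kF_k\ \forall e\in r_k\}.
\]

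\textbf{Step 2: one-dimensional characterization.} Since $\delta_{\max}C_k>0$, dividing by it preserves the inequalities. The constraints over $e\in r_k$ therefore collapse to the single condition $F_k\ge Q^{\mathrm{norm}}_{k,\max}(\Pi)/(\delta_{\max}C_k)$; the maximum is attained because $r_k$ has finitely many segments. Combined with the integrality and positivity requirement $F_k\ge 1$, this gives
\[
\mathcal{F}_k=\Bigl\{F_k\in\mathbb{Z}: F_k\ge \max\bigl\{1,\lceil Q^{\mathrm{norm}}_{k,\max}(\Pi)/(\delta_{\max}C_k)\rceil\bigr\}\Bigr\}.
\]
This uses the fact that for integer $m$ and real $x$, $m\ge x$ if and only if $m\ge\lceil x\rceil$. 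So $\mathcal{F}_k$ is an upward-closed ray of integers whose least element is exactly $F^{\mathrm{ml}}_k(\Pi)$. In particular it is nonempty, so $F^{\mathrm{ml}}(\Pi)\in\mathcal{F}_{\mathrm{cap}}(\Pi)$.

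\textbf{Step 3: componentwise minimality.} For any $F\in\mathcal{F}_{\mathrm{cap}}(\Pi)$, Step 2 gives $F_k\ge F^{\mathrm{ml}}_k(\Pi)$ for every $k$, i.e.\ $F\ge F^{\mathrm{ml}}(\Pi)$ componentwise. Together with membership, this makes $F^{\mathrm{ml}}(\Pi)$ the least element of $\mathcal{F}_{\mathrm{cap}}(\Pi)$. It is unique, since any two least elements dominate each other componentwise and are therefore equal.

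\textbf{Step 4: cost corollary.} If $C_{\mathrm{op}}$ is componentwise nondecreasing, then $F\ge F^{\mathrm{ml}}(\Pi)$ implies $C_{\mathrm{op}}(F)\ge C_{\mathrm{op}}(F^{\mathrm{ml}}(\Pi))$. This can be seen by raising one coordinate at a time, or directly from monotonicity with respect to the product order. Hence $F^{\mathrm{ml}}(\Pi)$ attains the minimum.

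\textbf{Edge cases.} The only delicate point is the boundary behaviour: when $Q^{\mathrm{norm}}_{k,\max}\le 0$, the ceiling is nonpositive and the outer $\max\{1,\cdot\}$ supplies the positivity constraint. I will verify explicitly that the ceiling characterization in Step 2 still holds in that case, and note that loads are nonnegative by construction. The decoupling in Step 1 relies essentially on the fixed-load hypothesis; if the loads depended on $F$, the argument would fail.
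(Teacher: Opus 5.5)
Your proposal is correct and follows essentially the same argument as the paper. The constraints separate by route, each route's constraints collapse to $F_k \ge Q^{\mathrm{norm}}_{k,\max}(\Pi)/(\delta_{\max}C_k)$, integrality and positivity give the least value $F^{\mathrm{ml}}_k(\Pi)$, and componentwise monotonicity of $C_{\mathrm{op}}$ gives the cost corollary; your explicit membership check $F^{\mathrm{ml}}(\Pi)\in\mathcal{F}_{\mathrm{cap}}(\Pi)$, via $m\ge x \iff m\ge\lceil x\rceil$ for integer $m$, states outright what the paper leaves implicit in ``smallest feasible value.''
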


\begin{proof}
For route $k$, feasibility requires
\[
F_k
\ge
\frac{Q^{\mathrm{norm}}_{k,e}(\Pi)}
{\delta_{\max} C_k}
\qquad
\forall e \in r_k.
\]
Therefore,
\[
F_k
\ge
\frac{Q^{\mathrm{norm}}_{k,\max}(\Pi)}
{\delta_{\max} C_k}.
\]
Since $F_k$ must be a positive integer, the smallest feasible value is
\[
\max\left\{
1,
\left\lceil
\frac{Q^{\mathrm{norm}}_{k,\max}(\Pi)}
{\delta_{\max} C_k}
\right\rceil
\right\}
=
F^{\mathrm{ml}}_k(\Pi).
\]
The constraints separate by route, so the same argument holds for every $k \in \{1,\ldots,K\}$. Hence any feasible $F \in \mathcal{F}_{\mathrm{cap}}(\Pi)$ satisfies $F \ge F^{\mathrm{ml}}(\Pi)$ componentwise. Any operator-frequency cost that is componentwise nondecreasing in $F$ is therefore minimized by $F^{\mathrm{ml}}(\Pi)$.
\end{proof}

For fixed route geometry and dispatch assumptions, the fleet requirement used in Eq.~\ref{eq:reward} is componentwise nondecreasing in route frequency, so this minimality property applies to the operator-frequency component of the simulator reward.

Proposition~\ref{prop:minimal_fos_projection} gives the max-load rule a precise role: for the segment loads induced by a completed route set, it is the smallest frequency vector that satisfies the fixed-load capacity condition. The full simulator reward can still depend on frequency through transfer waiting time, passenger assignment, and induced load patterns. We therefore define the value of joint frequency choices relative to this projection.

Because the route count, route length, and admissible operating choices are bounded in our experimental setting, the feasible route and frequency sets are finite. In unbounded variants, the maxima below can be replaced by suprema without changing the interpretation of the projection gap.

\paragraph{Frequency-projection gap.}
Let $\mathcal{P}$ be the feasible set of route sets. For each $\Pi \in \mathcal{P}$, let $\Omega_F(\Pi)$ denote a finite admissible set of positive integer frequency vectors for $\Pi$ that contains $F^{\mathrm{ml}}(\Pi)$. Let
\[
J(\Pi,F)
=
\mathbb{E}_{\xi}
\left[
\mathcal{R}(\Pi,\mathbf{1},F;\xi)
\right]
\]
denote the expected simulator reward for route set $\Pi$, fixed stop spacing $\mathbf{1}$, and frequency vector $F$. Define the joint route-frequency optimum
\[
J^{\star}_{\mathrm{joint}}
=
\max_{\Pi \in \mathcal{P}}
\max_{F \in \Omega_F(\Pi)}
J(\Pi,F),
\]
and the projected route-only optimum optimized in this work:
\[
J^{\star}_{\mathrm{proj}}
=
\max_{\Pi \in \mathcal{P}}
J(\Pi,F^{\mathrm{ml}}(\Pi)).
\]
The frequency-projection gap is
\[
\Delta_F
=
\max_{\Pi \in \mathcal{P}}
\left[
\max_{F \in \Omega_F(\Pi)} J(\Pi,F)
-
J(\Pi,F^{\mathrm{ml}}(\Pi))
\right].
\]

\begin{proposition}[Frequency-projection bound]
\label{prop:frequency_projection_gap}
The value difference between joint route-frequency optimization and the projected route-only problem satisfies
\[
0
\le
J^{\star}_{\mathrm{joint}}
-
J^{\star}_{\mathrm{proj}}
\le
\Delta_F.
\]
\end{proposition}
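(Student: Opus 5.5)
The plan is to prove the two inequalities separately, using only the finiteness of $\mathcal{P}$ and of each $\Omega_F(\Pi)$ (so every maximum is attained) and the standing assumption that $F^{\mathrm{ml}}(\Pi)\in\Omega_F(\Pi)$ for every $\Pi$. Proposition~\ref{prop:minimal_fos_projection} is not needed here; we use only that the projection is one admissible frequency choice for each route set.

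\emph{Lower bound.} Fix any $\Pi\in\mathcal{P}$. Since $F^{\mathrm{ml}}(\Pi)\in\Omega_F(\Pi)$, we have $\max_{F\in\Omega_F(\Pi)}J(\Pi,F)\ge J(\Pi,F^{\mathrm{ml}}(\Pi))$. Taking the maximum over $\Pi$ on both sides preserves the inequality, and this gives $J^\star_{\mathrm{joint}}\ge J^\star_{\mathrm{proj}}$.

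\emph{Upper bound.} Let $(\Pi^\dagger,F^\dagger)$ attain $J^\star_{\mathrm{joint}}$; it exists by finiteness. Then
\[
J^\star_{\mathrm{joint}}-J^\star_{\mathrm{proj}}
\le J(\Pi^\dagger,F^\dagger)-J(\Pi^\dagger,F^{\mathrm{ml}}(\Pi^\dagger)).
\]
This holds because $J^\star_{\mathrm{proj}}\ge J(\Pi^\dagger,F^{\mathrm{ml}}(\Pi^\dagger))$, as $\Pi^\dagger$ is feasible for the projected problem. The right-hand side equals $\max_{F\in\Omega_F(\Pi^\dagger)}J(\Pi^\dagger,F)-J(\Pi^\dagger,F^{\mathrm{ml}}(\Pi^\dagger))$, and this is at most $\Delta_F$ by the definition of $\Delta_F$ as a maximum over $\Pi$. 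Equivalently, we can state this step as the elementary fact $\max_\Pi a(\Pi)-\max_\Pi b(\Pi)\le\max_\Pi[a(\Pi)-b(\Pi)]$.

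There is no real obstacle here. The only care needed is to justify that the maxima exist, which follows from finiteness or, in unbounded variants, from replacing maxima with suprema and arguing with $\varepsilon$-optimal pairs. It also helps to note that the expectations $J$ are finite, so the differences are well defined; this is immediate because the reward in Eq.~\ref{eq:reward} is bounded for any fixed design.
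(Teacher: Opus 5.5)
Your proposal is correct and follows essentially the same argument as the paper. The lower bound comes from $F^{\mathrm{ml}}(\Pi)\in\Omega_F(\Pi)$, and the upper bound comes from evaluating at a joint maximizer $\Pi^\star$, using $J(\Pi^\star,F^{\mathrm{ml}}(\Pi^\star))\le J^\star_{\mathrm{proj}}$ together with the definition of $\Delta_F$.
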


\begin{proof}
Because $F^{\mathrm{ml}}(\Pi) \in \Omega_F(\Pi)$, the joint optimizer can always choose the projected frequency vector. Therefore,
\[
J^{\star}_{\mathrm{joint}}
=
\max_{\Pi \in \mathcal{P}}
\max_{F \in \Omega_F(\Pi)}
J(\Pi,F)
\ge
\max_{\Pi \in \mathcal{P}}
J(\Pi,F^{\mathrm{ml}}(\Pi))
=
J^{\star}_{\mathrm{proj}}.
\]
This proves the lower bound.

For the upper bound, let $\Pi^{\star}$ be a route set attaining $J^{\star}_{\mathrm{joint}}$. By definition of $\Delta_F$,
\[
\max_{F \in \Omega_F(\Pi^{\star})} J(\Pi^{\star},F)
\le
J(\Pi^{\star},F^{\mathrm{ml}}(\Pi^{\star}))
+
\Delta_F.
\]
Also,
\[
J(\Pi^{\star},F^{\mathrm{ml}}(\Pi^{\star}))
\le
\max_{\Pi \in \mathcal{P}}
J(\Pi,F^{\mathrm{ml}}(\Pi))
=
J^{\star}_{\mathrm{proj}}.
\]
Combining the two inequalities gives
\[
J^{\star}_{\mathrm{joint}}
\le
J^{\star}_{\mathrm{proj}}
+
\Delta_F,
\]
which proves the result.
\end{proof}

The gap $\Delta_F$ identifies the value available from frequency choices after route geometry has been fixed. For a unit vector $e_k$, deliberately increasing service on route $k$ improves a completed route set $\Pi$ when there exists an integer increment $q > 0$ such that
\[
F^{\mathrm{ml}}(\Pi) + q e_k \in \Omega_F(\Pi)
\quad\text{and}\quad
J\bigl(\Pi,F^{\mathrm{ml}}(\Pi)+q e_k\bigr)
>
J\bigl(\Pi,F^{\mathrm{ml}}(\Pi)\bigr).
\]
This includes cases where a connector route has low direct demand but higher frequency improves transfer paths or changes passenger assignment. In the present study, such effects are summarized by $\Delta_F$; extending \systemname{} with a second-stage frequency policy would directly target this gap.

% \[
% F^{\mathrm{ml}}(\Pi) + q e_k \in \Omega_F(\Pi)
% \]
% and
% \[
% J(\Pi,F^{\mathrm{ml}}(\Pi)+q e_k)
% >
% J(\Pi,F^{\mathrm{ml}}(\Pi)).
% \]

%% file: appendix_sections/state_representation.tex
\section{State Representation}
\label{app:state}
% Both \systemname{} and End-to-end RL use the same state representation. 
The state ($s_t$) encodes the static network and evolving design context on $G$ with $s_t \;=\; \bigl(X_t,\ \mathcal{I},\ Z\bigr).$ 

\textbf{Node features ($X_t\in\mathbb{R}^{n\times 16}$).}\; Let $V_{\mathrm{cur}}$ be the nodes already placed on the route under construction at time $t$, $V_{\mathrm{cmp}}$ the union of nodes across completed routes, and $V_{\mathrm{core}}=V_{\mathrm{cur}}\cup V_{\mathrm{cmp}}$. Let $\mathcal{C}_t$ be the set of admissible candidates at time $t$, i.e., the one-hop neighbors of the current frontier that are not already in the route $V_{\mathrm{cur}}$. The node features for node $i$ can then be classified into six groups:

% The Genetic Algorithm covers $111$ nodes ($77.62\%$) with $111.99$~km and achieves an $81.05\%$ service rate.
% Let $\mathcal{C}_t$ be the one hop expansion set formed by neighbors of the current frontier that are not in $V_{\mathrm{cur}}$

\begin{itemize}[leftmargin=12pt]
    \setlength{\itemsep}{0pt}
    \setlength{\parskip}{0pt}
    \item Geometry and connectivity: normalized $(x_i,y_i)$ coordinates and node degree.
    \item OD marginals:
    % \begin{align}
    %   d_{\text{out}}(i) &= \sum_{j} D_{ij}, & 
    %   d_{\text{in}}(i)  &= \sum_{j} D_{ji}. \label{eq:od_marginals}
    % \end{align}
    \begin{align}
      d_{\text{out}}(i) &= \sum_{j=1}^{n} D_{ij}, & 
      d_{\text{in}}(i)  &= \sum_{j=1}^{n} D_{ji}. \label{eq:od_marginals}
    \end{align}
    \item Candidate demand between $i$ and the current route (nonzero only if $i\in\mathcal{C}_t$):
    \begin{align}
      a^{\text{cand}}_{i\to \mathrm{cur}} &= \mathbf{1}\{i\in\mathcal{C}_t\}\sum_{j\in V_{\mathrm{cur}}} D_{ij}, \label{eq:cand_out}\\
      a^{\text{cand}}_{i\leftarrow \mathrm{cur}} &= \mathbf{1}\{i\in\mathcal{C}_t\}\sum_{j\in V_{\mathrm{cur}}} D_{ji}. \label{eq:cand_in}
    \end{align}
    \item Designed network demand at $i$ with respect to nodes already in any designed route (nonzero only if $i\in V_{\mathrm{core}}$):
    \begin{align}
      a^{\text{core}}_{i\to \mathrm{core}} &= \mathbf{1}\{i\in V_{\mathrm{core}}\}\sum_{j\in V_{\mathrm{core}}} D_{ij}, \label{eq:core_out}\\
      a^{\text{core}}_{i\leftarrow \mathrm{core}} &= \mathbf{1}\{i\in V_{\mathrm{core}}\}\sum_{j\in V_{\mathrm{core}}} D_{ji}. \label{eq:core_in}
    \end{align}
    \item Route conditioned demand for all nodes:
    \begin{align}
      a^{\text{all}}_{i\to \mathrm{cur}} &= \sum_{j\in V_{\mathrm{cur}}} D_{ij},&
      a^{\text{all}}_{i\leftarrow \mathrm{cur}} &= \sum_{j\in V_{\mathrm{cur}}} D_{ji}, \label{eq:all_cur}\\
      a^{\text{all}}_{i\to \mathrm{cmp}} &= \sum_{j\in V_{\mathrm{cmp}}} D_{ij},&
      a^{\text{all}}_{i\leftarrow \mathrm{cmp}} &= \sum_{j\in V_{\mathrm{cmp}}} D_{ji}. \label{eq:all_cmp}
    \end{align}
    \item Flags: one indicator each for current route membership $\mathbf{1}\{i\in V_{\mathrm{cur}}\}$, the fraction of completed routes that contain $i$ in $[0,1]$, and valid next node $\mathbf{1}\{i\in\mathcal{C}_t\}$.
\end{itemize}

% trip production and attraction at node $i$ i.e.,

\noindent This representation exposes the policy to demand patterns at multiple scales, from immediate next candidate nodes to the global network, enabling it to reason about ridership potential, transfers, and coverage, without requiring direct access to the OD matrix. Equations \eqref{eq:od_marginals} summarize the baseline incoming and outgoing demands at node $i$, independent of the current design. Equations \eqref{eq:cand_out} and \eqref{eq:cand_in} quantify the immediate marginal gain of selecting a candidate by measuring flows between a candidate and the current route. Equations \eqref{eq:core_out} and \eqref{eq:core_in} summarize how well a node is integrated with the already designed network. The route conditioned terms \eqref{eq:all_cur} and \eqref{eq:all_cmp} provide additional context, i.e., a global view of demand relative to the current and completed routes.
% , making potential transfer opportunities and overlaps explicit in the input

% \vspace{-6pt}
\textbf{Edge connectivity ($\mathcal{I}$).}\; The directed edge list encodes graph topology for message passing. Since streets are bidirectional, we include both $(u,v)$ and $(v,u)$, i.e., $\mathcal{I}\;=\;\{(u,v)\in V\times V \mid \{u,v\}\in E\}$.

% \begin{equation}
%     \mathcal{I}\;=\;\{(u,v)\in V\times V \mid \{u,v\}\in E\}
% \end{equation}
% \vspace{-6pt}

\textbf{Edge features ($Z\in\mathbb{R}^{|\mathcal{I}|\times 2}$).}\;
For each directed edge, $Z$ stores length and free flow speed.

\vspace{4pt}
\noindent All continuous features are min-max scaled to $[0,1]$ using network level bounds. Demand aggregates are computed from $D$ and scaled by the global reference $\max\!\bigl(\max_i\sum_j D_{ij},\ \max_j\sum_i D_{ji}\bigr)$. Terms that depend on $\mathcal{C}_t$ or $V_{\mathrm{core}}$ are set to zero outside those sets. Binary indicators take values in $\{0,1\}$.

%% file: appendix_sections/policy_network.tex
\section{Policy Network}
\label{app:policy}

This section provides additional architecture details for the policy-value network. The actor is permutation equivariant because it applies the same scoring function to every node, while the critic is permutation invariant because it pools node embeddings before predicting a graph-level value. For fixed hidden widths, attention heads, and block count $B$, the number of trainable parameters is independent of the number of nodes $n$.

\textbf{Shared GATv2 backbone.}\;
At state $s=(X,\mathcal{I},Z)$, the node features $X\in\mathbb{R}^{n\times d_x}$ contain per-node spatial attributes, OD-demand aggregates, route-membership indicators, and valid-next-node indicators. The directed edge list $\mathcal{I}$ gives graph connectivity, and $Z\in\mathbb{R}^{|\mathcal{I}|\times 2}$ contains edge attributes, namely link length and free-flow speed. 

Node features are linearly projected to $64$ channels. A stack of $B$ GATv2~\cite{brody2021attentive} blocks produces intermediate node representations $\mathbf{h}^{(b)}\in\mathbb{R}^{n\times d_b}$ for $b\in\{1,\ldots,B\}$. Compared with GAT~\cite{velivckovic2017graph}, GATv2 uses dynamic attention that depends jointly on both endpoints. Each block conditions message passing on edge attributes, so link length and free-flow speed can affect node updates.

Each block applies pre-layer normalization, a GATv2 attention layer, a nonlinear activation, feature dropout, and a residual path. The residual path is the identity when input and output widths match, and a learned linear projection otherwise. Attention dropout is applied inside the GATv2 layer. In the trained configuration, dropout modules are present but disabled.

Jumping Knowledge aggregation~\cite{xu2018jk} preserves information from all message-passing depths. For each node $v$, the outputs of all $B$ blocks are concatenated and projected to the final node embedding:
\[
\mathbf{z}_v = W_{\mathrm{JK}} \left[ \mathbf{h}_v^{(1)}\,\Vert\,\mathbf{h}_v^{(2)}\,\Vert\,\cdots\,\Vert\, \mathbf{h}_v^{(B)} \right] +\mathbf{b}_{\mathrm{JK}}, \qquad H=[\mathbf{z}_1;\ldots;\mathbf{z}_n]\in\mathbb{R}^{n\times d},
\]
where $W_{\mathrm{JK}}\in\mathbb{R}^{d\times\sum_{b=1}^{B}d_b}$ is the Jumping Knowledge projection. For the default $B=4$ configuration, the block widths are $[128,128,64,64]$, so JK aggregation concatenates $128+128+64+64=384$ channels and projects them to $d=64$. This adds $384\cdot64+64=24{,}640$ parameters.

The default attention-head counts are $[8,8,4,4]$. Head outputs are averaged rather than concatenated, so each block output width remains $d_b$. Block $1$ maps $64$ channels to $128$ and uses a projected residual path; Block $2$ keeps width $128$ and uses an identity residual; Block $3$ maps $128$ channels to $64$ and uses a projected residual path; Block $4$ keeps width $64$ and uses an identity residual.

\textbf{Actor head.}\;
The actor uses a pointer-style scoring mechanism~\cite{vinyals2015pointer,yang2022graph}. A shared MLP scores every node embedding $\mathbf{z}_v$ and produces one logit per node. The MLP is applied to all nodes, so the actor is permutation equivariant. A feasibility mask removes inadmissible actions, and the remaining logits are normalized to obtain $P_\theta(\cdot\mid s)$. The actor selects one node per construction step, matching the sequential route-extension process. The default actor MLP has hidden widths $256$, $128$, and $64$.

\textbf{Critic head.}\;
The critic applies global mean pooling and global max pooling to the node embedding matrix $H$, concatenates the two pooled vectors, and passes the result through an MLP to predict $V_\theta(s)$. Pooling makes the critic permutation invariant and produces one scalar value per graph. The default critic MLP has hidden widths $256$, $128$, and $64$.

\textbf{Complexity and implementation.}\;
The backbone is computed once per state to obtain $H$. The actor scores all nodes with a shared MLP, and the critic applies parameter-free global pooling before predicting $V_\theta(s)$. For fixed hidden widths, attention heads, and block count $B$, the trainable parameter count does not depend on $n$. The forward-pass cost grows with the graph as $O(B(n+|\mathcal{I}|))$, which is linear in $n$ for sparse road graphs where $|\mathcal{I}|=O(n)$.

The input contains $16$ node features and $2$ edge features. The default configuration uses $B=4$ GATv2 blocks, final node embedding dimension $d=64$, block widths $[128,128,64,64]$, and attention-head counts $[8,8,4,4]$. More generally, for $B$ blocks we use the channel schedule $[128]^{\lfloor B/2\rfloor}+[64]^{\lceil B/2\rceil}$ and the head schedule $[8]^{\lfloor B/2\rfloor}+[4]^{\lceil B/2\rceil}$. All linear layers use orthogonal initialization, and LayerNorm parameters are initialized to unit scale and zero bias.

%% file: appendix_sections/training_procedures_hyperparameters.tex
\begin{table*}[t!]
\centering
\footnotesize
\renewcommand{\arraystretch}{1.15}
\setlength{\tabcolsep}{3pt}
\resizebox{0.99\textwidth}{!}{%
\begin{tabular}{@{}ll ll ll lll@{}}
\midrule
\multicolumn{2}{@{}l}{\textbf{Simulation}} & \multicolumn{2}{l}{\textbf{Policy Network}} & \multicolumn{2}{l}{\textbf{\systemname{}}} & \multicolumn{3}{l@{}}{\textbf{End-to-End RL}} \\
& & & & & & & $\alpha{=}0.3$ & $\alpha{=}1.0$ \\
\midrule
Horizon & $10{,}000$ & Node features & $16$ & Env steps $S_{\max}$ & $\approx 10^6$ & Env steps & \multicolumn{2}{l}{$10^6$} \\
Time step & $1$\,s & Edge features & $2$ & MCTS sims $N_{\mathrm{iter}}$ & $500$ & Discount $\gamma$ & \multicolumn{2}{l}{$0.999$} \\
Bus capacity & $40$ & GATv2 blocks & $4$ & PUCT $c$ & $1.0$, $1.5$ & GAE $\lambda$ & \multicolumn{2}{l}{$0.95$} \\
Stop duration & $60$\,s & Activation & tanh & Dirichlet $\alpha_{\mathrm{dir}}$ & $0.3$ & Value coef & \multicolumn{2}{l}{$0.5$} \\
Routes $K$ & $16$ & Channels & $128$, $128$, $64$, $64$ & Dirichlet $\varepsilon$ & $0.25$ & Learning rate $\eta$ & $5{\times}10^{-5}$ & $10^{-5}$ \\
Max length $L_{\max}$ & $14$ & Attn heads & $8$, $8$, $4$, $4$ & Buffer size & $50$k & Clip $\epsilon$ & $0.2$ & $0.1$ \\
Modal split $\alpha$ & $0.3$, $1.0$ & Actor MLP & $256$, $128$, $64$ & Workers $W$ & $8$ / $16$ & Epochs $K_e$ & $8$ & $4$ \\
Access radius & $0.5$\,km & Critic MLP & $256$, $128$, $64$ & Batch size & $256$ & Batch size & $256$ & $128$ \\
Stop spacing & $1$ & Attn dropout & $0$, $0$, $0$, $0$ & Learning rate $\eta$ & $10^{-4}$ & Entropy coef & $0.01$ & $0.02$ \\
Platoon $\Delta n$ & $5$ & Hidden dim & $64$ & Train steps/iter & $200$ & LR anneal & No & Yes \\
\midrule
\end{tabular}
}
\vspace{2pt}
\caption{Hyperparameters for simulation, search, and training. Simulation settings are shared; policy settings apply to \systemname{} and End-to-End RL. Pure MCTS lacks policy training and shares search budget and PUCT rule. \systemname{} and Pure MCTS use $N_{\mathrm{iter}}=500$ simulations per decision in final comparisons. Workers are $8$ or $16$. Sweeps selected training hyperparameters; grid sweeps varied search depth, data diversity, and model size. $\tau$ follows $1.0 \to 0.7 \to 0.5$.}
\label{table:hyperparams}
\vspace{-8pt}
\end{table*}

% \caption{Hyperparameters for simulation, policy network, search, and training. Simulation settings are shared across methods, and policy-network settings apply to \systemname{} and End-to-End RL. Pure MCTS has no policy-network training phase and shares only the search budget and PUCT rule. \systemname{} and Pure MCTS use $N_{\mathrm{iter}}=500$ simulations per decision in the final comparisons. Worker counts are $8$ or $16$ depending on the run configuration. Training hyperparameters were selected through sweeps, while search depth, data diversity, and model size were varied with grid-style sweeps. Temperature $\tau$ follows the schedule $1.0 \to 0.7 \to 0.5$ over training.}

\section{Training Procedures and Hyperparameters}
\label{app:training}

Experiments were conducted on an AMD Ryzen Threadripper 7960X CPU with $377$\,GB RAM and an NVIDIA RTX PRO $6000$ GPU. For the $1$M-step runs, End-to-End RL with PPO takes roughly $3$--$5$ hours depending on $\alpha$, \systemname{} with $N_{\mathrm{iter}}=500$ takes roughly $24$--$27$ hours, and Pure MCTS takes roughly $120$--$168$ hours. Table~\ref{table:hyperparams} summarizes the simulation, search, and training settings.

\subsection{Genetic Algorithm}
\label{app:ga}

We implement a genetic algorithm for transit network design where each individual represents a complete transit network $\Pi = (r_1,\ldots,r_K)$ of $K$ routes, each satisfying $L_{\min}\le |r_k|\le L_{\max}$ with $L_{\min}=2$ and $L_{\max}=14$.

\vspace{-8pt}
\begin{itemize}[leftmargin=12pt]
    \setlength{\itemsep}{0pt}
    \setlength{\parskip}{0pt}
    \item Initialization: To prevent premature convergence to local optima, the initial population combines three seeding strategies: (i)~demand-guided construction that greedily extends routes toward high OD-interaction nodes, (ii)~random feasible routes via constrained random walks, and (iii)~a warm-start individual sampled from the existing real-world routes.

    \item Operators: Selection uses tournament selection with size $3$. Crossover performs route exchange where each route index inherits from one parent with equal probability, preserving complete routes to maintain feasibility. Mutation applies path regeneration by cutting a route at a random interior node and regrowing via random walk, preserving promising prefixes while exploring new suffixes. A repair step ensures all routes contain at least $L_{\min}=2$ nodes after each operation, preventing degenerate one-node routes.

    \item Fitness: Each individual is evaluated by assigning frequencies via the max-load rule (Appendix~\ref{app:frequency}) and running simulation. Fitness uses the same terminal-only reward $\mathcal{R}$ (Eq.~\ref{eq:reward}) as \systemname.
\end{itemize}
\vspace{-8pt}

We use population size $50$ with crossover rate $80\%$, mutation rate $40\%$, and elitism count $5$, yielding approximately $5{,}000$ simulator evaluations (approximately $1$M environment steps) and a total wall-clock runtime of $8$ hours. We run $100$ generations per demand setting, with the best design found at generation $94$ for $\alpha=0.3$ and generation $81$ for $\alpha=1.0$.

\subsection{Bee Colony Optimization}
We adapt the implementation from Holliday et al.~\cite{holliday2025learning}, which itself extends the original Bee Colony algorithm of Nikoli\'{c} and Teodorovi\'{c}~\cite{nikolic2013transit}. Two mutation operators act on the route-set population: first, replacement mutation substitutes a dropped route with a shortest-path route between randomly selected terminals; second, endpoint mutation extends or shortens an existing route by adding or removing a node at either endpoint. We use population size $B=15$ with $E=10$ mutations per iteration over $400$ iterations. To match our setup, the implementation is modified so that (i) initialization places every initial route at the transit center, (ii) replacement mutations are forced to originate there, and (iii) endpoint mutations are reverted if they would remove the transit center from a route's start. Because Bee Colony Optimization optimizes the analytical cost function $C = C_p + C_o$, which has no notion of modal split, a single route set is generated and then evaluated through the UXsim simulator at both $\alpha=0.3$ and $\alpha=1.0$. The demand matrix is symmetrized as $D'_{ij} = (D_{ij} + D_{ji})/2$ to satisfy the symmetric demand assumption of the underlying network design problem formulation.

\subsection{Neural Evolutionary Algorithm}
The neural evolutionary algorithm augments Bee Colony Optimization by replacing half of the replacement mutations with route construction proposed by a graph neural network policy~\cite{holliday2024neural,holliday2025learning}. The construction policy is trained from scratch on $32{,}768$ synthetic $20$-node cities following the PPO protocol from Holliday et al.~\cite{holliday2025learning}, with the transit-center start constraint enforced at the model level: a mask tensor identifies the designated start node in each city, and the policy assigns $-\infty$ logits to all other start positions so that only routes originating at that node can be produced. We retrain rather than reuse the published pre-trained weights because those weights were learned without this constraint, causing every neural mutation in our setup to be rejected and the algorithm to degrade to plain Bee Colony Optimization. The algorithm is initialized from the best of $100$ samples drawn from the trained construction policy, then run for $400$ iterations using the same population as Bee Colony Optimization. As with Bee Colony Optimization, a single route set is generated under the analytical objective and then evaluated through UXsim at both demand levels.

\subsection{Pure Monte Carlo Tree Search}
Monte Carlo Tree Search with uniform action priors $P(s,a) = 1/|\mathcal{C}_t|$ over admissible actions and a random rollout till terminal state followed by full route simulation for leaf evaluation. The PUCT selection rule from the main-text Eq.~\ref{eq:puct}, $N_{\mathrm{iter}}=500$, and matched $c$ values per $\alpha$ are the same as in \systemname{}, and a single tree is carried across route boundaries via re-rooting. Each rollout costs $\sim 8$\,s on Bloomington compared to $\sim 1$\,ms for the GNN value head; rollouts are parallelized across $8$ workers, each running an independent simulator instance.

\begin{algorithm}[t!]
\small
\caption{End-to-End Reinforcement Learning}
\label{alg:ppo_trnd}
\begin{algorithmic}[1]
  \STATE \textbf{Input:} Graph $G=(V,E)$, OD matrix $D$, edge index $\mathcal{I}$, edge features $Z$, routes $K$, max route length $L_{\max}$, episodes per update $M$
  \STATE \textbf{Output:} Policy $\pi_\theta$ yielding routes $\Pi=(r_1,\dots,r_K)$
  \STATE Initialize policy-value network $(\pi_\theta,V_\theta)$ and PPO buffer $B\gets\varnothing$
  \WHILE{environment steps $< S_{\max}$}
      \STATE \textit{// Parallel full-episode collection}
      \FOR{workers $w=1,\ldots,N$ \textbf{in parallel} until $M$ episodes are collected}
          \STATE Reset environment; $\Pi\gets\varnothing$; initialize current route $r_1$ from the transit center
          \WHILE{episode not terminated}
              \STATE $s_t \gets \textsc{FormState}(\text{current route},\text{completed routes},\mathcal{I},Z)$
              \STATE $\mathcal{C}_t\gets$ valid one-hop frontier neighbors not already in current route
              \STATE $m_t\gets\textsc{Mask}(\mathcal{C}_t)$
              \IF{$\mathcal{C}_t=\varnothing$}
                  \STATE $a_t\gets\textsc{NoValidAction}$; \quad $\log \pi_{\theta}(a_t\mid s_t,m_t)\gets 0$
              \ELSE
                  \STATE $a_t\sim\pi_\theta(\cdot\mid s_t,m_t)$ and record $\log \pi_{\theta}(a_t\mid s_t,m_t)$
              \ENDIF
              \STATE Execute $a_t$ in the route-construction environment
              \IF{$a_t$ ends a non-final route}
                  \STATE Add completed route to $\Pi$; initialize next route from the transit center
              \ENDIF
              \IF{$a_t$ ends the final route}
                  \STATE Add final route to $\Pi$; run UXsim once on $\Pi$; set $r_t\gets\mathcal{R}$ (Eq.~\ref{eq:reward})
              \ELSE
                  \STATE Set $r_t\gets\mathcal{R}_{\mathrm{partial}}$ (Eq.~\ref{eq:partial}), or $0$ in terminal-only mode
              \ENDIF
              \STATE Store $(s_t,a_t,r_t,\log\pi_\theta(a_t\mid s_t,m_t),V_\theta(s_t),m_t,\texttt{done})$ locally
          \ENDWHILE
          \STATE Compute GAE advantages $\hat{A}_t$ and returns over the full episode; append trajectory to $B$
      \ENDFOR
      \STATE \textit{// PPO optimization}
      \FOR{epoch $=1$ to $K_{\mathrm{epochs}}$}
          \FOR{minibatch $b\sim B$}
              \STATE $\theta \gets \theta-\eta\nabla_\theta\mathcal{L}_{\mathrm{PPO}}(\theta)$ using Eq.~\ref{eq:ppo_loss}
          \ENDFOR
      \ENDFOR
      \STATE Clear $B$ and broadcast updated weights to workers
  \ENDWHILE
  \STATE \textbf{return} $\pi_\theta$
\end{algorithmic}
\end{algorithm}

% \label{app:endtoend}
\subsection{End-to-End Reinforcement Learning}
A key challenge in applying standard policy gradient methods to the Transit Route Network Design Problem (TRNDP) is the long episode horizon: with $K=16$ routes of up to $L_{\max}=14$ nodes each, episodes can span over $200$ decisions before any terminal reward is available. This creates a severe credit assignment problem~\cite{sutton2018reinforcement}, as the policy must learn which early decisions contributed to the final outcome. To address this, the end-to-end approach augments the reward from Eq.~\eqref{eq:reward} with lightweight shaping rewards during route construction. These provide more frequent feedback to guide learning, while the simulation-based reward remains the primary signal. All reported End-to-End RL results use the same delta-based shaping rule without early-stop penalty at both demand levels. The full training procedure is summarized in Algorithm~\ref{alg:ppo_trnd}, where $V_{\mathrm{cur}}$ denotes nodes in the current route and $V_{\mathrm{cmp}}$ denotes nodes across all completed routes.

Helpers in Algorithm~\ref{alg:ppo_trnd} refer to route-construction environment operations. \textsc{FormState} builds the graph observation from the current and completed routes, \textsc{Mask} converts $\mathcal{C}_t$ into the binary feasibility mask used by the policy, and \textsc{NoValidAction} denotes the forced route-finalization transition when no valid extension remains; it is not a learned stop action.

During the construction of a route $r_k$, the agent receives feedback based on the evolving network topology. We use the incremental change in demand coverage $\Delta\Psi_t=\max(0,\Psi_t-\Psi_{t-1})$ to reward actions that expand the reachable O-D pairs. This delta-based formulation directly measures the value of each action rather than accumulated state, providing clearer credit assignment. The reported End-to-End RL baseline penalizes route overlap but does not include an early-stop term:
\begin{equation}
  \mathcal{R}_{\text{partial}}
  = b_7 \cdot \Delta\Psi_t - b_8 \cdot \omega,
\label{eq:partial}
\end{equation}
where $\omega$ is the current route overlap ratio. We use $b_7=20$ and $b_8=8$. The coverage term is marginal, while the overlap term is a step-wise exposure penalty: during PPO rollout, repeatedly overlapping partial networks accumulate overlap cost across construction steps. This shaping term is used only for the End-to-End RL baseline. \systemname{} trains from terminal full-network evaluations, so its overlap penalty is applied through Eq.~\eqref{eq:reward} once per completed route set. These shaping terms remain small relative to the simulation-based reward so that the primary learning signal still comes from passenger flow outcomes. Alternative shaping variants, including early-stop penalties, are analyzed in Fig.~\ref{fig:learning_overview_0_3}. Partial rewards are also normalized online during training. Forced route termination is treated as an environment transition, and full UXsim evaluation is performed exactly once after all $K$ routes have been finalized.

The PPO update minimizes the negative clipped surrogate with a value loss and entropy bonus,
\begin{equation}
\label{eq:ppo_loss}
\begin{aligned}
\mathcal{L}_{\mathrm{PPO}}(\theta)
&=
-\mathbb{E}_t
\left[
\min\!\left(
\rho_t(\theta)\hat{A}_t,\,
\mathrm{clip}\!\left(\rho_t(\theta),1-\epsilon,1+\epsilon\right)\hat{A}_t
\right)
\right]
 + c_v L_V - c_e H,\\
\rho_t(\theta)
&=
\frac{\pi_\theta(a_t\mid s_t,m_t)}
{\pi_{\theta_{\mathrm{old}}}(a_t\mid s_t,m_t)} .
\end{aligned}
\end{equation}

% Reward coefficients (Eq.\ref{eq:reward}) and shaping weights (Eq.\ref{eq:partial}) were selected via The search explored [X] configurations

\vspace{10pt}
\subsection{\systemname{}}
\label{app:algorithm}
Training alternates between parallel data generation and network optimization. During data generation, $W$ workers independently construct complete transit networks using Monte Carlo Tree Search guided decisions. At each step, Dirichlet noise is added to the root priors to encourage exploration:
\begin{equation}
P(s,a) \leftarrow (1 - \varepsilon) \cdot P(s,a) + \varepsilon \cdot \eta_a, \quad \eta \sim \mathrm{Dir}(\alpha_{\mathrm{dir}}),
\label{eq:dirichlet}
\end{equation}
where $\alpha_{\mathrm{dir}}$ controls noise concentration and $\varepsilon$ controls the noise weight. After ordinary route-extension actions, the search tree is re-rooted at the selected child so the corresponding subtree statistics are reused. When a route has no valid extension and is finalized automatically, we root a new tree at the forced successor state. Algorithm~\ref{alg:alphatransit} summarizes the full \systemname{} training loop. Training runs for $S_{\max}$ environment steps across $W$ parallel workers using learning rate $\eta$.

Helpers in Algorithm~\ref{alg:alphatransit} refer to environment operations. \textsc{InitializeRoute} starts a route at the configured transit hub, \textsc{FormState} builds the graph observation from the environment state, and \textsc{ValidActions}$(s_t)$ returns the admissible one-hop extensions $\mathcal{C}_t$. \textsc{Append} stores each search target tuple for replay, and \textsc{Routes} returns the completed route tuple before terminal simulation.

\begin{algorithm}[t!]
  \caption{AlphaTransit}
  \label{alg:alphatransit}
  \begin{algorithmic}[1]
    \STATE \textbf{Input:} Graph $G$, OD matrix $D$, edges $\mathcal{I}$, features $Z$, routes $K$, max length $L_{\max}$, MCTS simulations $N_{\mathrm{iter}}$, workers $W$.
    \STATE \textbf{Output:} Policy-value network $f_\theta$ yielding route tuples $\Pi=(r_1,\ldots,r_K)$.
    \STATE \textbf{Initialize:} Network $f_\theta$, replay buffer $\mathcal{D}\gets\varnothing$, reward statistics $\mathcal{Z}$.
    \WHILE{$t_{\mathrm{env}} < S_{\max}$}
        \STATE Broadcast current parameters $\theta$ to workers; set temperature $\tau$.
        \STATE \textit{// Parallel MCTS-guided episode collection}
        \FOR{each worker $w\in\{1,\ldots,W\}$ \textbf{in parallel}}
            \STATE Reset environment; $r_1\gets[\textsc{InitializeRoute}()]$; initialize MCTS tree $\mathcal{T}$.
            \STATE $\mathcal{E}_w\gets[\,]$; $H_w\gets0$.
            \WHILE{route set is incomplete}
                \STATE $s_t\gets\textsc{FormState}(\mathcal{I},Z)$; $\mathcal{C}_t\gets\textsc{ValidActions}(s_t)$.
                \IF{$\mathcal{C}_t=\varnothing$}
                    \STATE Finalize current route; initialize the next route if one remains; reset $\mathcal{T}$.
                    \STATE $H_w\gets H_w+1$; \textbf{continue}.
                \ENDIF
                \STATE Run $N_{\mathrm{iter}}$ MCTS simulations from $s_t$ using $f_\theta$ and PUCT.
                \STATE Derive $\pi_t(a\mid s_t)\propto N(s_t,a)^{1/\tau}$ over $a\in\mathcal{C}_t$.
                \STATE $\mathcal{E}_w.\textsc{Append}(s_t,\pi_t,\mathcal{C}_t)$.
                \STATE Sample $a_t\sim\pi_t$; apply $a_t$; re-root $\mathcal{T}$ at the selected child.
                \STATE $H_w\gets H_w+1$.
            \ENDWHILE
            \STATE $\Pi_w\gets\textsc{Routes}()$; $z_w\gets R_T(\Pi_w;\xi)$.
        \ENDFOR
        \STATE \textit{// Aggregate episodes}
        \FOR{$w=1$ to $W$}
            \STATE Update reward statistics $\mathcal{Z}$ with raw reward $z_w$.
            \STATE Add $(s_i,\pi_i,\mathcal{C}_i,z_w)$ to $\mathcal{D}$ for all $(s_i,\pi_i,\mathcal{C}_i)\in\mathcal{E}_w$.
        \ENDFOR
        \STATE $t_{\mathrm{env}}\gets t_{\mathrm{env}}+\sum_w H_w$.
        \STATE \textit{// Policy-value network optimization}
        \FOR{step $=1$ to $N_{\mathrm{steps}}$}
            \STATE Sample minibatch $\mathcal{B}=\{(s_j,\pi_j,\mathcal{C}_j,z_j)\}_{j=1}^{m}\subseteq\mathcal{D}$.
            \STATE Normalize each $z_j$ with $\mathcal{Z}$ to obtain $\tilde{z}_j$.
            \STATE Evaluate $f_\theta(s_j)$ and mask $P_\theta$ over $\mathcal{C}_j$ for each tuple.
            \STATE Define $\ell_j(\theta)=-\sum_{a\in\mathcal{C}_j}\pi_j(a\mid s_j)\log P_\theta(a\mid s_j)+\left(V_\theta(s_j)-\tilde{z}_j\right)^2$.
            \STATE $\theta\gets\theta-\eta\nabla_\theta\tfrac{1}{m}\sum_{j=1}^{m}\ell_j(\theta)$.
        \ENDFOR
    \ENDWHILE
    \STATE \textbf{return} $f_\theta$.
  \end{algorithmic}
\end{algorithm}

\vspace{-4pt}
\begin{itemize}[leftmargin=12pt]
    \setlength{\itemsep}{0pt}
    \setlength{\parskip}{0pt}

    \item Value Normalization: Terminal rewards can vary significantly in scale across episodes. To stabilize learning, we normalize rewards online:
        \[
        \tilde{z} = \mathrm{clip}\left(\frac{z - \mu}{\sigma + \epsilon}, -3, 3\right),
        \]
    where running statistics $(\mu, \sigma)$ are updated with all raw rewards before normalizing.

    \item Temperature Schedule: The temperature $\tau$ in Eq.~\eqref{eq:mcts_policy} uses a schedule based on training progress:
        \[
        \tau(\texttt{progress}) =
        \begin{cases}
        1.0 & \text{if } \texttt{progress} < 0.3, \\
        0.7 & \text{if } 0.3 \le \texttt{progress} < 0.6, \\
        0.5 & \text{otherwise}.
        \end{cases}
        \]
\end{itemize}
\vspace{-8pt}

During evaluation, \systemname{} uses near-greedy selection ($\tau = 0.1$) over the MCTS visit-count distribution and disables Dirichlet noise. For End-to-End RL, we use low-temperature sampling ($\tau = 0.1$) rather than argmax selection. This follows standard practice in neural combinatorial optimization~\cite{kool2019attention, kwon2020pomo, bello2016neural}, where sampling is preferred over greedy selection for three reasons: (i)~the policy is trained to optimize expected return under its stochastic distribution, not the argmax; (ii)~deterministic selection produces identical route sets across evaluation runs, precluding any measure of variance; and (iii)~\systemname{}'s near-greedy evaluation operates over search-refined visit counts, not raw logits, so aligning PPO's evaluation temperature provides a fairer comparison.

% Both methods report mean $\pm$ standard deviation over $10$ evaluation seeds.

%% file: appendix_sections/real_world_networks_data.tex
\section{Real-world Networks and Data}
\label{app:network}

\begin{figure*}[t!]
    \centering
    \includegraphics[width=0.98\linewidth]{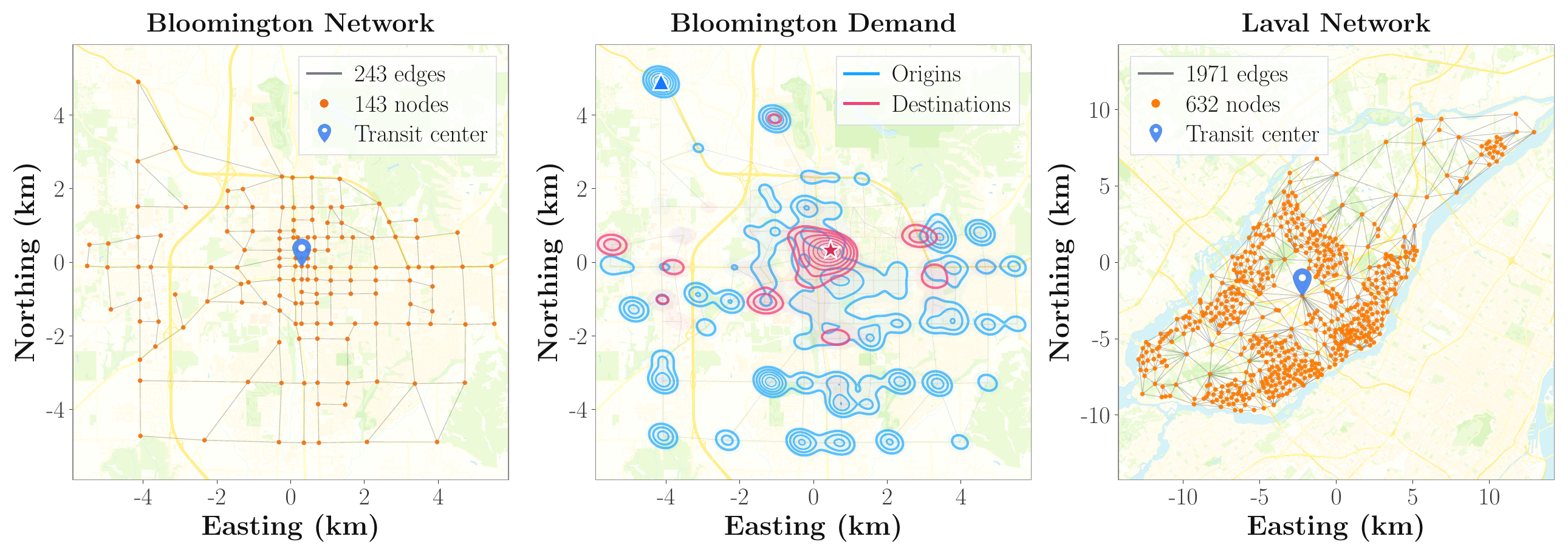}
    \vspace{-8pt}
    \caption{The Bloomington, Indiana transportation network ($\sim$152.3 km$^2$). The demand map shows the spatial distribution of trip origins (blue) and destinations (red), with peak origin demand of $344$ trips per hour at node $1$ ($\triangle$) and peak destination demand of $1{,}681$ trips per hour at node $129$ ($\star$). The Laval, Quebec transportation network ($\sim 256$ km$^2$) is used as the out-of-distribution generalization benchmark. The Laval network, with $632$ nodes and $1{,}971$ links, is obtained from Holliday et al.~\cite{holliday2025learning}; the blue transit-center marker indicates node $542$.}
    \vspace{-10pt}
    \label{fig:bloomington}
\end{figure*}

\subsection{Bloomington, Indiana}
\label{app:bloomington}

We introduce a novel city-scale transit network dataset for Bloomington, Indiana. Unlike existing benchmark networks from literature, our dataset uniquely captures real-world aspects in three dimensions: (i) the underlying transportation network, (ii) travel demand derived from census data, and (iii) transit routes currently operating in the city. 

% The network topology was derived from actual road infrastructure to ensuring accurate representation of real-world connectivity and spatial relationships.
\textbf{Network Structure.}\;
The network consists of $143$ nodes and $243$ bidirectional edges, covering an area of approximately $152.3~\text{km}^{2}$. The network topology was derived from road infrastructure with several practical assumptions made to balance modeling fidelity with simulation efficiency.
\vspace{-8pt}
\begin{itemize}[leftmargin=12pt]
    \setlength{\itemsep}{0pt}
    \setlength{\parskip}{0pt}
    \item Planar Representation: Three-dimensional infrastructure elements such as tunnels, overpasses, and underpasses are modeled as planar connections.
    \item Edge Geometry: All edges are represented as bidirectional. When two parallel one-way streets exist next to each other, they are consolidated into a single bidirectional edge positioned at their centerline. Further, edges follow shortest-distance connections between nodes rather than exact street curvatures; the length differences are negligible for the scale of analysis.
    \item Speed Assignment: All edges are assigned a uniform free-flow speed of $16.67$ m/s ($60$ km/h or $37$ miles/hr), reflecting typical urban traffic speed limit.
    \item Highway Exclusion: Interstate $69$ highway segments within the city limits were excluded from the transit network, as city transit primarily serves local destinations and highways lack appropriate passenger access points. The bus routes currently operating in the city also avoid the highway.
    \item Shared Routes: Although existing bus routes may have slightly different outbound and inbound paths, to reduce complexity, we model them as identical paths, manually selecting the most appropriate nodes (based on access, length, and community served) to maintain essential connectivity.
\end{itemize}

% \vspace{-8pt}
\textbf{Coordinate Transformation.}\;
The raw geospatial data uses geographic coordinates (latitude and longitude) in angular units. However, operations in our processing pipeline such as calculating census block centroids and edge distances require a flat, Cartesian plane for accurate results. Therefore, the coordinates were transformed to a Cartesian coordinate system using the Universal Transverse Mercator (UTM) Zone~$16$N projection, which covers Indiana including Bloomington. The UTM projection preserves local angles and shapes while providing true metric distances and areas with minimal distortion, making it ideal for our case.

% We did not divide by 250. we divide by $250$ (workdays/ year)
% which consists of non-overlapping regions designed to be relatively permanent
\textbf{Demand Generation.}\;
The primary demand component was obtained from commuting trips captured in the $2022$ LEHD Origin-Destination Employment Statistics (LODES) from the U.S. Census Bureau~\cite{uscensus_lehd_lodes}, which provides flows between census blocks with home locations as origins and work locations as destinations. We utilize block-level census data~\cite{uscensus_tiger_shapefiles} and processed a total of $2{,}399$ census blocks within Monroe County, Indiana (FIPS code $105$, GEOID $18105$), which was further reduced to $1{,}475$ blocks to confine the area of interest to the vicinity of Bloomington. For each census block, its centroid was calculated and the demand origins and destinations in that block were assigned to the node nearest to the centroid.

Because the LODES data excludes trips for non-commuting purposes such as school and shopping (typically classified as home-based other or non-home-based in transportation research), to account for this mixed composition of traffic, we scale the commuting flows by $150\%$, consistent with typical values ranging from $100\%-200\%$ depending on time of day~\cite{mcguckin2018summary}. Further, to express demand on an hourly basis, we adopt a peak‑hour share of $11\%$ of daily traffic, which lies within the typical $6\%-12\%$ range~\cite{roess2011traffic}. The resulting origin–destination (OD) demand matrix contains $5{,}737$ pairs, with a maximum origin demand of $344$ trips per hour and a maximum destination demand of $1{,}681$ trips per hour.
This OD matrix is treated as an exogenous peak-hour trip table. In each experiment, $\alpha$ is fixed before evaluation and scales the table into a transit-demand scenario.

\textbf{Existing Bus Routes.}\;
The Bloomington Transit system~\cite{bloomingtontransit_gtfs, transitland_bloomingtontransit} operates $16$ bus routes which map to an average of $14.2$ nodes per route in our network (ranging from $8$ to $24$ nodes). 

\subsection{Laval, Quebec}
\label{app:laval}

\begin{figure}[t!]
    \centering
    \includegraphics[width=0.98\linewidth]{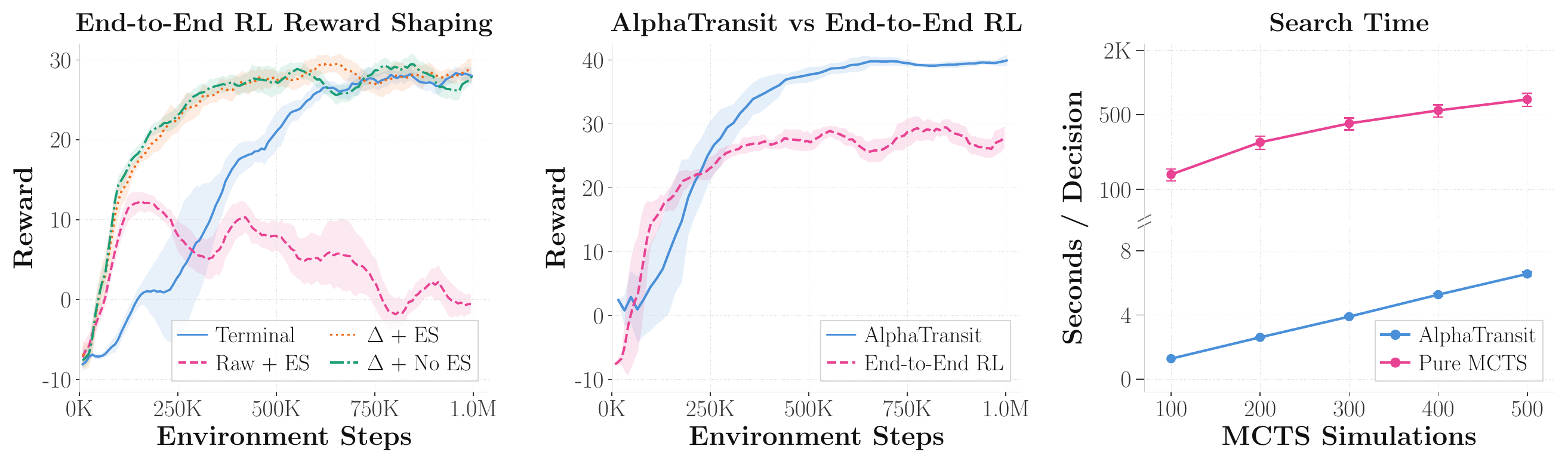}
    \vspace{-6pt}
    \caption{Learning dynamics and search cost under full demand ($\alpha=1.0$). \textbf{LEFT}: Curves average two training seeds per reward mode. Reward shaping remains important for End-to-End RL, although the higher service-weight objective is more forgiving than under mixed demand ($\alpha=0.3$). Early Stopping (ES) penalizes routes that terminate before $L_{\max}$, while Delta-coverage ($\Delta$) rewards only newly covered demand. The $\Delta$ variants give the strongest final performance, with $\Delta$ + ES ending slightly highest. \textbf{MIDDLE}: MCTS-provided search targets improve sample efficiency under the same environment-step budget. At search depth $N_{\mathrm{iter}}=500$, \systemname{} leads from earliest plotted evaluations and finishes with reward $39.93$, compared with $27.92$ for End-to-End RL. \textbf{RIGHT}: Learned policy and value estimates make MCTS search practical across search depths $N_{\mathrm{iter}}\in[100,500]$ on the Bloomington network using one CPU worker. At $N_{\mathrm{iter}}=500$, \systemname{} requires $6.56$ seconds per decision, while Pure MCTS requires $695.48$ seconds per decision.}
    \label{fig:learning_overview_1_0}
    \vspace{-8pt}
\end{figure}

To evaluate cross-city generalization, we use the Laval, Qu\'{e}bec network from Holliday et al.~\cite{holliday2025learning}. The Laval graph contains $632$ nodes and $1{,}971$ bidirectional edges over approximately $256$\,km$^2$, making it roughly $4.4{\times}$ larger than Bloomington in node count. We use node $542$ as the transit center because it has the highest degree of $21$, highest closeness centrality, and highest betweenness centrality in the network. Despite the larger node count, Laval has the same graph diameter of $17$ hops and a comparable mean shortest-path length of $7.1$ hops because of its higher connectivity, with average degree $6.24$ compared with $3.40$ for Bloomington. This structural similarity supports using the same route-design parameters as in Bloomington, namely $K{=}16$ routes and maximum route length $L_{\max}{=}14$. In Laval, $99.7\%$ of node pairs are reachable within $14$ hops, so this length bound remains appropriate despite the larger graph. For simulation, Laval's unscaled demand rate is about $548$K trips per hour, roughly $60{\times}$ Bloomington's, and is uniformly scaled by $0.14{\times}$ to match per-link traffic density, yielding about $77$K trips per hour. The Bloomington-trained agent is evaluated directly on Laval without additional training or fine-tuning. Because the shared Laval dataset does not include real-world bus routes, the Real-World baseline is excluded from this evaluation.

% Policy network figure moved to Section~\ref{sec:policy_network} (Methodology).

%%%%%%%%%%%%%%%%%%%%%%%% MORE EXACT CALCULATION

%% file: appendix_sections/results.tex
\section{Extended Results}
\label{app:results}
\label{app:metrics}

% We use $\tau_p^{\text{wait}}$ and $\tau_p^{\text{move}}$ for passenger-level accumulated waiting and in-vehicle movement times, and barred quantities for averages in minutes.
We evaluate every transit design through the same simulation pipeline. At simulation end, $N_{\text{comp}}$, $N_{\text{ongoing}}$, and $N_{\text{waiting}}$ denote passengers who completed trips, are onboard buses, and are waiting at stops, respectively. We set $N_{\text{boarded}}=N_{\text{served}}=N_{\text{comp}}+N_{\text{ongoing}}$ for passengers counted as served, and $N_{\text{want}}$ denotes potential riders. We use $\tau_p^{\text{wait}}$ and $\tau_p^{\text{move}}$ for passenger-level accumulated waiting and in-vehicle movement times, respectively, and variables with an overbar denote averages in minutes. The seven evaluation metrics are:
\begin{itemize}[leftmargin=12pt]
    \setlength{\itemsep}{0pt}
    \setlength{\parskip}{0pt}
    \item Service rate (\%): Fraction of potential demand counted as served, $\sigma = N_{\text{boarded}} / N_{\text{want}}$. This differs from the fixed-denominator service term $\rho=N_{\mathrm{boarded}}/N_{\mathrm{OD}}$ used by the training reward.

    \item Wait time (min): Average waiting time over served riders, $\bar{t}_{\text{wait}}$.

    \item Transfer rate (\%): Fraction of completed trips requiring at least one transfer, $N_{\text{transfer}} / N_{\text{comp}} \times 100$.

    \item Journey time (min): Average elapsed passenger journey time for passengers counted as served,
    \begin{equation}
        \bar{t}_{\mathrm{journey}}
        =
        \frac{1}{N_{\text{served}}}
        \sum_{p \in \mathcal{S}_{\mathrm{served}}}
        \bigl(\tau_p^{\text{wait}} + \tau_p^{\text{move}}\bigr),
    \end{equation}
    where $\mathcal{S}_{\mathrm{served}}$ contains passengers who completed trips or are onboard at the simulation end. This metric includes both waiting and in-vehicle movement.

    \item Route efficiency (pax/km): Passengers served per unit infrastructure, $N_{\text{comp}} / L_{\text{total}}$, where $L_{\text{total}}$ is the total route length in kilometers.

    \item Fleet size: Total number of buses deployed across all routes, $N_{\mathrm{bus}}$.

    \item Bus utilization (\%): Average load across all buses, $u$, computed as occupancy divided by capacity.
\end{itemize}

\begin{table}[t!]
  \centering
  % \small
  \renewcommand{\arraystretch}{1.25}
  \resizebox{0.99\textwidth}{!}{%
    \begin{tabular}{cl ccccccc}
        & & \multicolumn{4}{c}{Passenger Metrics} & \multicolumn{3}{c}{Operator Metrics} \\
        \cmidrule(lr){3-6} \cmidrule(lr){7-9}
        & Method
        & \makecell{Service \\ Rate (\%) $\uparrow$}
        & \makecell{Wait \\ Time (min) $\downarrow$}
        & \makecell{Transfer \\ Rate (\%) $\downarrow$}
        & \makecell{Journey \\ Time (min) $\downarrow$}
        & \makecell{Route \\ Efficiency $\uparrow$}
        & \makecell{Fleet \\ Size $\downarrow$}
        & \makecell{Bus \\ Util. (\%) $\uparrow$} \\
        \midrule
        \multirow{10}{*}{\rotatebox{90}{$\alpha = 0.3$}}
        & Real-World & $42.77 {\scriptstyle \pm 1.05}$ & $14.02 {\scriptstyle \pm 0.56}$ & $86.05 {\scriptstyle \pm 0.44}$ & $46.83 {\scriptstyle \pm 1.09}$ & $13.15 {\scriptstyle \pm 0.42}$ & $89.00$ & $17.69 {\scriptstyle \pm 0.18}$ \\
        & Random Walk & $36.41 {\scriptstyle \pm 3.68}$ & $21.79 {\scriptstyle \pm 2.75}$ & $82.26 {\scriptstyle \pm 3.39}$ & $49.66 {\scriptstyle \pm 3.52}$ & $5.06 {\scriptstyle \pm 1.27}$ & $35.40 {\scriptstyle \pm 4.54}$ & $11.10 {\scriptstyle \pm 2.10}$ \\
        & Demand Cover & $40.11 {\scriptstyle \pm 3.43}$ & $19.15 {\scriptstyle \pm 3.52}$ & $77.45 {\scriptstyle \pm 3.29}$ & $45.46 {\scriptstyle \pm 4.61}$ & $6.18 {\scriptstyle \pm 1.76}$ & $40.10 {\scriptstyle \pm 10.95}$ & $12.42 {\scriptstyle \pm 2.74}$ \\
        & Shortest Path & $37.89 {\scriptstyle \pm 4.17}$ & $23.35 {\scriptstyle \pm 2.73}$ & $84.43 {\scriptstyle \pm 2.56}$ & $45.06 {\scriptstyle \pm 4.06}$ & $4.75 {\scriptstyle \pm 0.80}$ & $21.60 {\scriptstyle \pm 2.58}$ & $6.53 {\scriptstyle \pm 1.07}$ \\
        & Genetic Alg. & $50.42 {\scriptstyle \pm 0.50}$ & $9.07 {\scriptstyle \pm 0.50}$ & $82.61 {\scriptstyle \pm 0.27}$ & $39.14 {\scriptstyle \pm 0.59}$ & $17.12 {\scriptstyle \pm 0.21}$ & $79.00$ & $19.76 {\scriptstyle \pm 0.21}$ \\
        & Bee Colony & $39.83 {\scriptstyle \pm 0.66}$ & $10.50 {\scriptstyle \pm 0.49}$ & $87.97 {\scriptstyle \pm 0.20}$ & $34.09 {\scriptstyle \pm 0.44}$ & $18.58 {\scriptstyle \pm 0.25}$ & $94.00$ & $12.07 {\scriptstyle \pm 0.09}$ \\
        & Neural Evol. & $47.85 {\scriptstyle \pm 0.57}$ & $5.65 {\scriptstyle \pm 0.20}$ & $92.16 {\scriptstyle \pm 0.11}$ & $29.94 {\scriptstyle \pm 0.29}$ & $23.71 {\scriptstyle \pm 0.22}$ & $101.00$ & $13.11 {\scriptstyle \pm 0.16}$ \\
        & Pure MCTS & $53.30 {\scriptstyle \pm 0.97}$ & $7.42 {\scriptstyle \pm 0.54}$ & $84.97 {\scriptstyle \pm 2.58}$ & $37.57 {\scriptstyle \pm 1.34}$ & $19.87 {\scriptstyle \pm 1.01}$ & $86.00 {\scriptstyle \pm 4.15}$ & $21.93 {\scriptstyle \pm 1.26}$ \\
        & End-to-End RL & $49.72 {\scriptstyle \pm 1.77}$ & $8.46 {\scriptstyle \pm 1.28}$ & $84.49 {\scriptstyle \pm 1.08}$ & $41.14 {\scriptstyle \pm 1.77}$ & $18.09 {\scriptstyle \pm 1.43}$ & $111.90 {\scriptstyle \pm 6.04}$ & $19.89 {\scriptstyle \pm 1.02}$ \\
        & AlphaTransit & $54.64 {\scriptstyle \pm 0.54}$ & $7.13 {\scriptstyle \pm 0.16}$ & $82.66 {\scriptstyle \pm 0.29}$ & $35.81 {\scriptstyle \pm 0.40}$ & $21.99 {\scriptstyle \pm 0.33}$ & $80.00$ & $22.10 {\scriptstyle \pm 0.18}$ \\
        \midrule
        \multirow{10}{*}{\rotatebox{90}{$\alpha = 1.0$}}
        & Real-World & $58.44 {\scriptstyle \pm 0.95}$ & $15.95 {\scriptstyle \pm 0.39}$ & $81.90 {\scriptstyle \pm 0.31}$ & $52.85 {\scriptstyle \pm 0.72}$ & $61.83 {\scriptstyle \pm 0.92}$ & $281.00$ & $31.89 {\scriptstyle \pm 0.25}$ \\
        & Random Walk & $62.79 {\scriptstyle \pm 5.55}$ & $15.72 {\scriptstyle \pm 2.86}$ & $79.87 {\scriptstyle \pm 2.66}$ & $47.45 {\scriptstyle \pm 3.22}$ & $36.85 {\scriptstyle \pm 9.30}$ & $109.40 {\scriptstyle \pm 31.86}$ & $28.61 {\scriptstyle \pm 3.77}$ \\
        & Demand Cover & $58.03 {\scriptstyle \pm 7.49}$ & $16.28 {\scriptstyle \pm 2.61}$ & $75.08 {\scriptstyle \pm 4.23}$ & $47.49 {\scriptstyle \pm 3.26}$ & $35.52 {\scriptstyle \pm 7.52}$ & $124.60 {\scriptstyle \pm 32.86}$ & $26.83 {\scriptstyle \pm 2.83}$ \\
        & Shortest Path & $56.50 {\scriptstyle \pm 12.49}$ & $20.37 {\scriptstyle \pm 4.73}$ & $69.60 {\scriptstyle \pm 8.07}$ & $42.84 {\scriptstyle \pm 2.88}$ & $23.47 {\scriptstyle \pm 8.92}$ & $48.00 {\scriptstyle \pm 12.20}$ & $18.15 {\scriptstyle \pm 3.34}$ \\
        & Genetic Alg. & $81.17 {\scriptstyle \pm 0.99}$ & $8.94 {\scriptstyle \pm 0.45}$ & $87.28 {\scriptstyle \pm 0.23}$ & $44.32 {\scriptstyle \pm 0.57}$ & $100.19 {\scriptstyle \pm 1.87}$ & $254.00$ & $43.16 {\scriptstyle \pm 0.29}$ \\
        & Bee Colony & $64.74 {\scriptstyle \pm 1.49}$ & $11.96 {\scriptstyle \pm 0.60}$ & $84.45 {\scriptstyle \pm 0.20}$ & $48.58 {\scriptstyle \pm 0.67}$ & $89.36 {\scriptstyle \pm 0.98}$ & $301.00$ & $28.78 {\scriptstyle \pm 0.48}$ \\
        & Neural Evol. & $70.51 {\scriptstyle \pm 1.85}$ & $11.91 {\scriptstyle \pm 0.91}$ & $87.34 {\scriptstyle \pm 0.23}$ & $49.20 {\scriptstyle \pm 0.51}$ & $99.64 {\scriptstyle \pm 1.42}$ & $320.00$ & $30.39 {\scriptstyle \pm 0.49}$ \\
        & Pure MCTS & $73.79 {\scriptstyle \pm 4.96}$ & $8.68 {\scriptstyle \pm 1.91}$ & $80.07 {\scriptstyle \pm 2.66}$ & $43.92 {\scriptstyle \pm 3.22}$ & $77.39 {\scriptstyle \pm 5.41}$ & $200.80 {\scriptstyle \pm 24.75}$ & $36.29 {\scriptstyle \pm 1.81}$ \\
        & End-to-End RL & $73.70 {\scriptstyle \pm 1.68}$ & $10.04 {\scriptstyle \pm 0.82}$ & $85.29 {\scriptstyle \pm 0.53}$ & $50.52 {\scriptstyle \pm 1.30}$ & $83.78 {\scriptstyle \pm 4.40}$ & $346.50 {\scriptstyle \pm 14.54}$ & $43.31 {\scriptstyle \pm 1.57}$ \\
        & AlphaTransit & $82.08 {\scriptstyle \pm 0.55}$ & $8.48 {\scriptstyle \pm 0.39}$ & $82.55 {\scriptstyle \pm 0.19}$ & $43.10 {\scriptstyle \pm 0.49}$ & $110.58 {\scriptstyle \pm 0.90}$ & $267.00$ & $45.02 {\scriptstyle \pm 0.24}$ \\
        \bottomrule
    \end{tabular}}
    \vspace{8pt}
    \caption{Performance comparison across ten methods under mixed demand ($\alpha=0.3$) and full transit demand ($\alpha=1.0$) on the Bloomington benchmark. \systemname{} achieves the highest service rate in both demand settings, with $54.64\%$ under mixed demand and $82.08\%$ under full transit demand; under full transit demand, it also achieves the best wait time, route efficiency, and bus utilization. Arrows indicate the direction of improvement ($\uparrow$ higher is better, $\downarrow$ lower is better), and values report mean $\pm$ standard deviation over $10$ evaluation seeds for each trained policy or generated route set. \systemname{} uses $N_{\mathrm{iter}}=500$, while End-to-End RL is the PPO baseline without MCTS. Fleet-size deviations appear only when the route-design procedure yields different route sets across seeds, since frequencies and fleet size are deterministic once a route set is fixed.}
   \label{tab:modal_split_results_extended}
   \vspace{-8pt}
\end{table}

The demand coverage potential $\Psi$ is defined in Section~\ref{sec:method}. The route overlap ratio $\omega$ is computed over unique undirected road segments. Let $\mathcal{E}_{\Pi}$ be the set of segments used by at least one route, let $c_e$ be the number of nonempty routes containing segment $e$, and let $K_{\mathrm{eff}}$ be the number of routes with at least one segment. Each route contributes a segment at most once, so duplicate traversals within one route are not double-counted. We define
\begin{equation}
    \omega(\Pi)=
    \begin{cases}
    \dfrac{1}{|\mathcal{E}_{\Pi}|}
    \sum_{e\in\mathcal{E}_{\Pi}}
    \dfrac{c_e-1}{K_{\mathrm{eff}}-1},
    & K_{\mathrm{eff}}>1,\\[8pt]
    0, & K_{\mathrm{eff}}\le 1.
    \end{cases}
\end{equation}
Thus $\omega=0$ when no road segment is shared across routes and $\omega=1$ when every used segment appears in every nonempty route. This is an edge-count-based measure, not a length-weighted or node-overlap measure.

Table~\ref{tab:modal_split_results_extended} gives the full metric-level comparison behind the main-text results. \systemname{} achieves the strongest service-rate outcome at both modal splits, reaching $54.64\%$ at $\alpha=0.3$ and $82.08\%$ at $\alpha=1.0$. The $\alpha=0.3$ setting highlights the trade-off structure: \systemname{} has the highest service rate and bus utilization, while Neural Evolutionary has lower wait times and journey times at lower served demand and Shortest Path uses the smallest fleet. At $\alpha=1.0$, \systemname{} is stronger across most passenger and operator metrics, with the best service rate, wait time, route efficiency, and bus utilization. Relative to End-to-End RL, its service rate is $9.9\%$ and $11.4\%$ higher across the two regimes; relative to Pure MCTS, it is $2.5\%$ and $11.2\%$ higher.

Fig.~\ref{fig:learning_overview_1_0} shows that the same qualitative trends hold under the $\alpha=1.0$ objective. End-to-End RL benefits from dense reward shaping, with the $\Delta$-coverage variants giving the strongest PPO learning curves, but \systemname{} still achieves higher reward under the same environment-step budget. The search-time panel further shows that the learned policy/value network reduces MCTS decision cost by roughly two orders of magnitude relative to Pure MCTS at the same search depth, making search-guided route construction practical.

\begin{figure}[t!]
    \centering
    \includegraphics[width=0.98\linewidth]{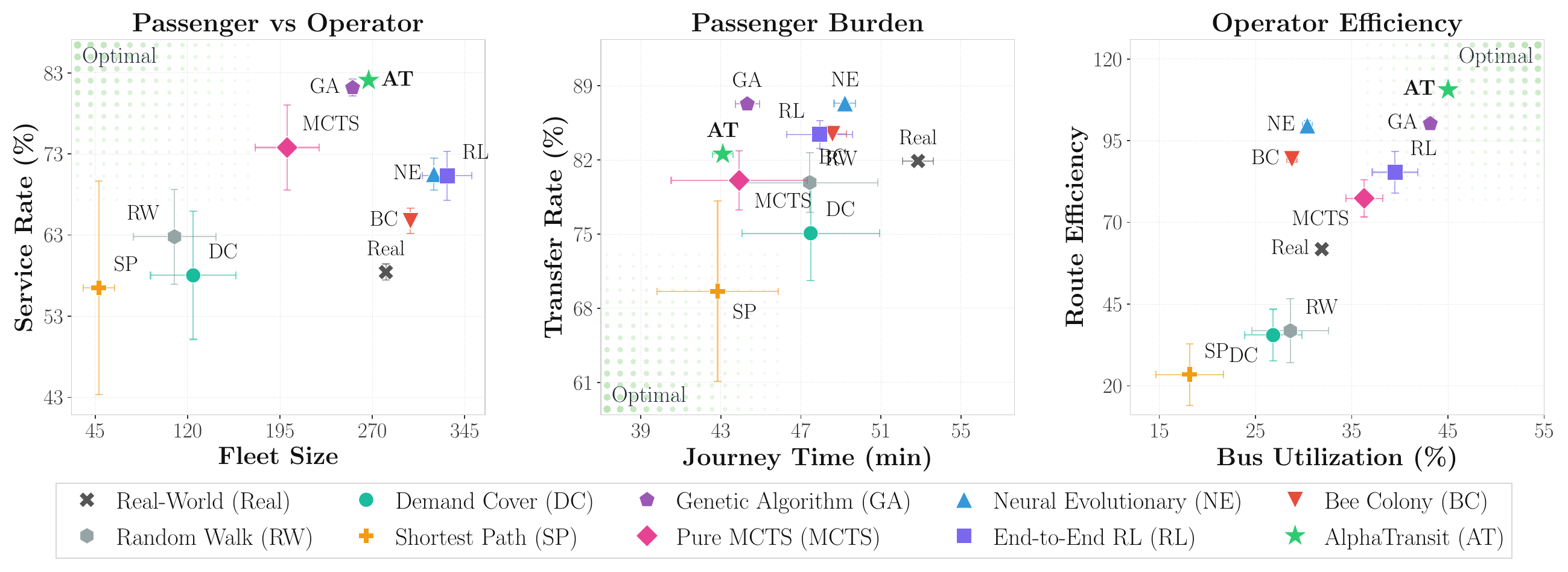}
    \vspace{-6pt}
     \caption{Full transit demand results on the Bloomington benchmark ($\alpha=1.0$). Points show means; error bars denote $\pm 1$ standard deviation. In each panel, green overlay and Optimal label mark the direction of improvement: upper-left for service rate versus fleet size, lower-left for journey time versus transfer rate, and upper-right for bus utilization versus route efficiency. \textbf{LEFT}: \systemname{} achieves highest service rate, $82.08\%$, with fleet size $267$. \textbf{MIDDLE}: \systemname{} obtains $43.10$ minutes of journey time and an $82.55\%$ transfer rate. \textbf{RIGHT}: \systemname{} achieves highest bus utilization, $45.02\%$, and highest route efficiency, $110.58$. Together, the panels show that \systemname{} pairs the highest service rate and operator efficiency with a competitive passenger-burden trade-off.}
    \label{fig:comparison_1_0}
    \vspace{-12pt}
\end{figure}

\begin{figure}[t!]
    \centering
    \vspace{10pt}
    \includegraphics[width=0.98\linewidth]{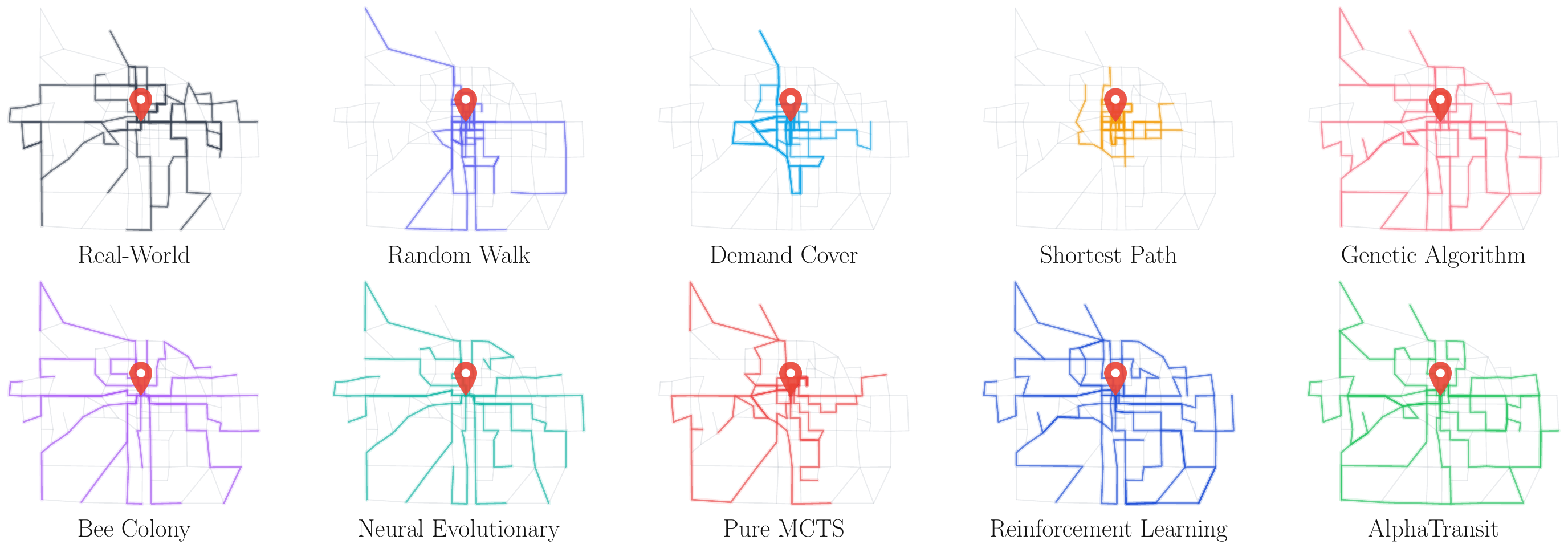}
    \vspace{-2pt}
    \caption{Selected route designs under full transit demand ($\alpha=1.0$) on the Bloomington street network. Each panel overlays selected routes on the same street basemap and marks the transit center. Pure MCTS denotes search without a learned policy/value network, while \systemname{} uses neural network-guided MCTS. Structural-analysis values show \systemname{} serves $128$ nodes, corresponding to $89.5\%$ node coverage, with $20.5\%$ shared-edge overlap and route distance $129.0$ km. In comparison, Real-World routes serve $114$ nodes with $79.7\%$ node coverage and $34.8\%$ shared-edge overlap, while End-to-End RL serves $135$ nodes with $94.4\%$ node coverage and $19.3\%$ shared-edge overlap i.e., \systemname{} is not simply maximizing node coverage relative to End-to-End RL.}
    \label{fig:routes_1_0}
    \vspace{-12pt}
\end{figure}

%% file: appendix_sections/broader_impacts.tex
\section{Broader Impacts}
\label{app:broader_impacts}

Urban population is projected to reach $60\%$ of the global population by $2030$~\cite{unwup2018}, placing pressure on transit infrastructure supporting $7.7$ billion annual trips in the United States alone~\cite{apta2025ridership}. In many large cities, less than $20\%$ of jobs can be reached within an hour using affordable transit, disproportionately affecting low-income workers~\cite{worldbank2024mobility}. \systemname{} is intended as decision support for transit agencies that must improve service quality under fleet, budget, and planning constraints, including frequency and wait time metrics central to ridership~\cite{transitcenter2019}. By evaluating candidate route networks with realistic topology, census-derived demand, and simulation-based outcomes, the framework can help agencies compare coverage, wait time, completed trips, and fleet use within a single evaluation loop. Potential harms could arise if optimized plans replace community input or if aggregate demand objectives underserve riders whose needs are not well captured in the data. Automated planning should therefore complement stakeholder engagement, and future work should incorporate fairness and accessibility constraints so algorithmic transit design supports inclusive mobility goals.

%% file: main.bib
@article{seo2025uxsim,
  title={UXsim: lightweight mesoscopic traffic flow simulator in pure Python},
  author={Seo, Toru},
  journal={Journal of Open Source Software},
  volume={10},
  number={106},
  pages={7617},
  year={2025}
}

@inproceedings{mumford2013new,
  title={New heuristic and evolutionary operators for the multi-objective urban transit routing problem},
  author={Mumford, Christine L},
  booktitle={2013 IEEE congress on evolutionary computation},
  pages={939--946},
  year={2013},
  organization={IEEE}
}

@article{huang2020closer,
  title={A closer look at invalid action masking in policy gradient algorithms},
  author={Huang, Shengyi and Onta{\~n}{\'o}n, Santiago},
  journal={arXiv preprint arXiv:2006.14171},
  year={2020}
}

@article{duran2022survey,
  title={A survey on the transit network design and frequency setting problem},
  author={Dur{\'a}n-Micco, Javier and Vansteenwegen, Pieter},
  journal={Public transport},
  volume={14},
  number={1},
  pages={155--190},
  year={2022},
  publisher={Springer}
}

@article{yoo2023reinforcement,
  title={A reinforcement learning approach for bus network design and frequency setting optimisation},
  author={Yoo, Sunhyung and Lee, Jinwoo Brian and Han, Hoon},
  journal={Public Transport},
  volume={15},
  number={2},
  pages={503--534},
  year={2023},
  publisher={Springer}
}

@misc{uscensus_lehd_lodes,
  author       = {{U.S. Census Bureau, Center for Economic Studies}},
  title        = {LEHD Origin-Destination Employment Statistics (LODES)},
  howpublished = {\url{https://lehd.ces.census.gov/data}},
  year         = 2025,
  note         = {Accessed: 2025-10-06}
}

@misc{uscensus_tiger_shapefiles,
  author       = {{U.S. Census Bureau}},
  title        = {TIGER/Line Shapefiles},
  howpublished = {\url{https://www.census.gov/cgi-bin/geo/shapefiles/index.php}},
  year         = 2025,
  note         = {Accessed: 2025-10-06}
}

@book{roess2011traffic,
  title={Traffic Engineering},
  author={Roess, R.P. and Prassas, E.S. and McShane, W.R.},
  isbn={9780136135739},
  lccn={2010013599},
  url={https://books.google.com/books?id=fGE6PgAACAAJ},
  year={2011},
  publisher={Pearson}
}

@techreport{mcguckin2018summary,
  title={Summary of travel trends: 2017 national household travel survey},
  author={McGuckin, Nancy and Fucci, Anthony and others},
  year={2018},
  institution={United States. Department of Transportation. Federal Highway Administration}
}

@misc{bloomingtontransit_gtfs,
  author       = {{Bloomington Transit}},
  title        = {GTFS Schedule Dataset},
  howpublished = {\url{https://bloomingtontransit.com/gtfs/}},
  institution  = {Bloomington Transit},
}

@misc{transitland_bloomingtontransit,
  author       = {{Transitland}},
  title        = {Bloomington Transit (BT) — Operator Details},
  howpublished = {\url{https://www.transit.land/operators/o-dnfq-bloomingtontransit}},
  institution  = {Transitland},
}

@inproceedings{kool2019attention,
  title={Attention, learn to solve routing problems!},
  author={Kool, Wouter and van Hoof, Herke and Welling, Max},
  booktitle={International Conference on Learning Representations},
  year={2019}
}

@article{brody2021attentive,
  title={How attentive are graph attention networks?},
  author={Brody, Shaked and Alon, Uri and Yahav, Eran},
  journal={arXiv preprint arXiv:2105.14491},
  year={2021}
}

@techreport{unwup2018,
  author = {{United Nations Department of Economic and Social Affairs}},
  title = {World Urbanization Prospects: The 2018 Revision},
  institution = {United Nations},
  year = {2019},
  url = {https://population.un.org/wup/}
}

@article{schmidt2024planning,
  title={Planning and optimizing transit lines},
  author={Schmidt, Marie and Sch{\"o}bel, Anita},
  journal={arXiv preprint arXiv:2405.10074},
  year={2024}
}

@article{schulman2017proximal,
  title={Proximal policy optimization algorithms},
  author={Schulman, John and Wolski, Filip and Dhariwal, Prafulla and Radford, Alec and Klimov, Oleg},
  journal={arXiv preprint arXiv:1707.06347},
  year={2017}
}

@article{schulman2015high,
  title={High-dimensional continuous control using generalized advantage estimation},
  author={Schulman, John and Moritz, Philipp and Levine, Sergey and Jordan, Michael and Abbeel, Pieter},
  journal={arXiv preprint arXiv:1506.02438},
  year={2015}
}

@article{huang202237,
  title={The 37 implementation details of proximal policy optimization},
  author={Huang, Shengyi and Dossa, Rousslan Fernand Julien and Raffin, Antonin and Kanervisto, Anssi and Wang, Weixun},
  journal={The ICLR Blog Track 2023},
  year={2022}
}

@misc{rl_bag_of_tricks,
  author       = {Jeremiah Coholich},
  title        = {A Bag of Tricks for Deep Reinforcement Learning},
  year         = {2023}, 
  url          = {https://www.jeremiahcoholich.com/post/rl_bag_of_tricks/#observation-normalization-and-clipping},
}

@book{sutton2018reinforcement,
  title={Reinforcement learning: An introduction},
  author={Sutton, Richard S and Barto, Andrew G},
  year={2018},
  publisher={MIT press}
}

@inproceedings{yang2022graph,
  title={Graph pointer neural networks},
  author={Yang, Tianmeng and Wang, Yujing and Yue, Zhihan and Yang, Yaming and Tong, Yunhai and Bai, Jing},
  booktitle={Proceedings of the AAAI conference on artificial intelligence},
  volume={36},
  number={8},
  pages={8832--8839},
  year={2022}
}

@article{velivckovic2017graph,
  title={Graph attention networks},
  author={Veličković, Petar and Cucurull, Guillem and Casanova, Arantxa and Romero, Adriana and Lio, Pietro and Bengio, Yoshua},
  journal={arXiv preprint arXiv:1710.10903},
  year={2017}
}

@article{newell2002simplified,
  title={A simplified car-following theory: a lower order model},
  author={Newell, Gordon Frank},
  journal={Transportation Research Part B: Methodological},
  volume={36},
  number={3},
  pages={195--205},
  year={2002},
  publisher={Elsevier}
}

@book{vuchic2017urban,
  title={Urban transit: operations, planning, and economics},
  author={Vuchic, Vukan R},
  year={2017},
  publisher={John Wiley \& Sons}
}

@book{vuchic2007urban,
  title={Urban transit systems and technology},
  author={Vuchic, Vukan R},
  year={2007},
  publisher={John Wiley \& Sons}
}

@article{buehler2019verkehrsverbund,
  title={Verkehrsverbund: The evolution and spread of fully integrated regional public transport in Germany, Austria, and Switzerland},
  author={Buehler, Ralph and Pucher, John and D{\"u}mmler, Oliver},
  journal={International Journal of Sustainable Transportation},
  volume={13},
  number={1},
  pages={36--50},
  year={2019},
  publisher={Taylor \& Francis}
}

@book{ceder2016public,
  title={Public transit planning and operation: Modeling, practice and behavior},
  author={Ceder, Avishai},
  year={2016},
  publisher={CRC press}
}

@article{kepaptsoglou2009transit,
  title={Transit route network design problem},
  author={Kepaptsoglou, Konstantinos and Karlaftis, Matthew},
  journal={Journal of transportation engineering},
  volume={135},
  number={8},
  pages={491--505},
  year={2009},
  publisher={American Society of Civil Engineers}
}

@article{zhao2006simulated,
  title={Simulated annealing--genetic algorithm for transit network optimization},
  author={Zhao, Fang and Zeng, Xiaogang},
  journal={Journal of Computing in Civil Engineering},
  volume={20},
  number={1},
  pages={57--68},
  year={2006},
  publisher={American Society of Civil Engineers}
}

@article{nikolic2013transit,
  title={Transit network design by bee colony optimization},
  author={Nikoli{\'c}, Milo{\v{s}} and Teodorovi{\'c}, Du{\v{s}}an},
  journal={Expert Systems with Applications},
  volume={40},
  number={15},
  pages={5945--5955},
  year={2013},
  publisher={Elsevier}
}

@article{holliday2025learning,
  title={Learning heuristics for transit network design and improvement with deep reinforcement learning},
  author={Holliday, Andrew and El-Geneidy, Ahmed and Dudek, Gregory},
  journal={Transportmetrica B: Transport Dynamics},
  volume={13},
  number={1},
  pages={2561863},
  year={2025},
  publisher={Taylor \& Francis}
}

@inproceedings{darwish2020optimising,
  title={Optimising public bus transit networks using deep reinforcement learning},
  author={Darwish, Ahmed and Khalil, Momen and Badawi, Karim},
  booktitle={2020 IEEE 23rd International Conference on Intelligent Transportation Systems (ITSC)},
  pages={1--7},
  year={2020},
  organization={IEEE}
}

@article{vermeir2021exact,
  title={An exact solution approach for the bus line planning problem with integrated passenger routing},
  author={Vermeir, Evert and Engelen, Wouter and Philips, Johan and Vansteenwegen, Pieter},
  journal={Journal of Advanced Transportation},
  volume={2021},
  number={1},
  pages={6684795},
  year={2021},
  publisher={Wiley Online Library}
}

@article{mandl1980evaluation,
  title={Evaluation and optimization of urban public transportation networks},
  author={Mandl, Christoph E},
  journal={European Journal of Operational Research},
  volume={5},
  number={6},
  pages={396--404},
  year={1980},
  publisher={Elsevier}
}

@article{heyken2019adaptive,
  title={An adaptive scaled network for public transport route optimisation},
  author={Heyken Soares, Philipp and Mumford, Christine L and Amponsah, Kwabena and Mao, Yong},
  journal={Public Transport},
  volume={11},
  number={2},
  pages={379--412},
  year={2019},
  publisher={Springer}
}

@book{fan2004optimal,
  title={Optimal transit route network design problem: Algorithms, implementations, and numerical results},
  author={Fan, Wei},
  year={2004},
  publisher={The University of Texas at Austin}
}

@article{vinyals2015pointer,
  title={Pointer networks},
  author={Vinyals, Oriol and Fortunato, Meire and Jaitly, Navdeep},
  journal={Advances in neural information processing systems},
  volume={28},
  year={2015}
}

@article{furth1981setting,
  title={Setting frequencies on bus routes: Theory and practice},
  author={Furth, Peter G and Wilson, Nigel HM},
  journal={Transportation Research Record},
  volume={818},
  number={1981},
  pages={1--7},
  year={1981}
}

@article{bello2016neural,
  title={Neural combinatorial optimization with reinforcement learning},
  author={Bello, Irwan and Pham, Hieu and Le, Quoc V and Norouzi, Mohammad and Bengio, Samy},
  journal={arXiv preprint arXiv:1611.09940},
  year={2016}
}

@inproceedings{li2023transit,
  title={A transit network design and frequency setting model with graph neural network and deep reinforcement learning},
  author={Li, Junjun and Dong, Hao and Zhao, Xuedong and Tang, Hao and Yin, Aimin and Xue, Ruchen},
  booktitle={Sixth International Conference on Computer Information Science and Application Technology (CISAT 2023)},
  volume={12800},
  pages={128005Y},
  year={2023},
  organization={SPIE}
}

@inproceedings{holliday2023augmenting,
  title={Augmenting transit network design algorithms with deep learning},
  author={Holliday, Andrew and Dudek, Gregory},
  booktitle={2023 IEEE 26th International Conference on Intelligent Transportation Systems (ITSC)},
  pages={2343--2350},
  year={2023},
  organization={IEEE}
}

@inproceedings{holliday2024neural,
  title={A neural-evolutionary algorithm for autonomous transit network design},
  author={Holliday, Andrew and Dudek, Gregory},
  booktitle={2024 IEEE International Conference on Robotics and Automation (ICRA)},
  pages={4457--4464},
  year={2024},
  organization={IEEE}
}

@article{silver2016mastering,
  author = {Silver, David and Huang, Aja and Maddison, Chris J. and Guez, Arthur and Sifre, Laurent and van den Driessche, George and Schrittwieser, Julian and Antonoglou, Ioannis and Panneershelvam, Veda and Lanctot, Marc and Dieleman, Sander and Grewe, Dominik and Nham, John and Kalchbrenner, Nal and Sutskever, Ilya and Lillicrap, Timothy and Leach, Madeleine and Kavukcuoglu, Koray and Graepel, Thore and Hassabis, Demis},
  title = {Mastering the game of {Go} with deep neural networks and tree search},
  journal = {Nature},
  volume = {529},
  pages = {484--489},
  year = {2016}
}

@article{silver2017mastering,
  author = {Silver, David and Schrittwieser, Julian and Simonyan, Karen and Antonoglou, Ioannis and Huang, Aja and Guez, Arthur and Hubert, Thomas and Baker, Lucas and Lai, Matthew and Bolton, Adrian and Chen, Yutian and Lillicrap, Timothy and Hui, Fan and Sifre, Laurent and van den Driessche, George and Graepel, Thore and Hassabis, Demis},
  title = {Mastering the game of {Go} without human knowledge},
  journal = {Nature},
  volume = {550},
  pages = {354--359},
  year = {2017}
}

@article{rosin2011multi,
  title={Multi-armed bandits with episode context},
  author={Rosin, Christopher D},
  journal={Annals of Mathematics and Artificial Intelligence},
  volume={61},
  number={3},
  pages={203--230},
  year={2011},
  publisher={Springer}
}

@article{silver2018general,
  author  = {Silver, David and Hubert, Thomas and Schrittwieser, Julian and Antonoglou, Ioannis and Lai, Matthew and Guez, Arthur and Lanctot, Marc and Sifre, Laurent and Kumaran, Dharshan and Graepel, Thore and Lillicrap, Timothy and Simonyan, Karen and Hassabis, Demis},
  title   = {A general reinforcement learning algorithm that masters chess, shogi, and {Go} through self-play},
  journal = {Science},
  volume  = {362},
  number  = {6419},
  pages   = {1140--1144},
  year    = {2018}
}

@article{weng2020pareto,
  title={Pareto-optimal transit route planning with multi-objective monte-carlo tree search},
  author={Weng, Di and Chen, Ran and Zhang, Jianhui and Bao, Jie and Zheng, Yu and Wu, Yingcai},
  journal={IEEE Transactions on Intelligent Transportation Systems},
  volume={22},
  number={2},
  pages={1185--1195},
  year={2021},
  publisher={IEEE}
}

@article{alkilane2025metrozero,
  author  = {Alkilane, Khalid and Lee, Der-Horng},
  title   = {{MetroZero}: Deep reinforcement learning and {Monte Carlo} tree search for optimized metro network expansion},
  journal = {IEEE Transactions on Intelligent Transportation Systems},
  volume  = {26},
  number  = {1},
  pages   = {810--823},
  year    = {2025}
}

@article{darvariu2023planning,
  author  = {Darvariu, Victor-Alexandru and Hailes, Stephen and Musolesi, Mirco},
  title   = {Planning spatial networks with {Monte Carlo} tree search},
  journal = {Proceedings of the Royal Society A: Mathematical, Physical and Engineering Sciences},
  volume  = {479},
  number  = {2269},
  pages   = {20220383},
  year    = {2023}
}

@article{amiri2024surrogate,
  title={Surrogate Assisted Monte Carlo Tree Search in Combinatorial Optimization},
  author={Amiri, Saeid and Zehtabi, Parisa and Dervovic, Danial and Cashmore, Michael},
  journal={arXiv preprint arXiv:2403.09925},
  year={2024}
}

@article{owais2026transit,
  title={Transit network design problem: a half century of methodological research},
  author={Owais, Mahmoud},
  journal={Innovative Infrastructure Solutions},
  volume={11},
  number={1},
  pages={3},
  year={2026},
  publisher={Springer}
}

@article{schrittwieser2020mastering,
  title        = {Mastering {Atari}, {Go}, chess and shogi by planning with a learned model},
  author       = {Schrittwieser, Julian and Antonoglou, Ioannis and Hubert, Thomas and Simonyan, Karen and Sifre, Laurent and Schmitt, Simon and Guez, Arthur and Lockhart, Edward and Hassabis, Demis and Graepel, Thore and Lillicrap, Timothy and Silver, David},
  journal      = {Nature},
  year         = {2020},
  volume       = {588},
  number       = {7839},
  pages        = {604--609},
  doi          = {10.1038/s41586-020-03051-4},
  publisher    = {Nature Publishing Group}
}

@article{fan2006optimal,
  title={Optimal transit route network design problem with variable transit demand: genetic algorithm approach},
  author={Fan, Wei and Machemehl, Randy B},
  journal={Journal of transportation engineering},
  volume={132},
  number={1},
  pages={40--51},
  year={2006},
  publisher={American Society of Civil Engineers}
}

@article{nayeem2014transit,
  title={Transit network design by genetic algorithm with elitism},
  author={Nayeem, Muhammad Ali and Rahman, Md Khaledur and Rahman, M Sohel},
  journal={Transportation Research Part C: Emerging Technologies},
  volume={46},
  pages={30--45},
  year={2014},
  publisher={Elsevier}
}

@inproceedings{lopez2018microscopic,
  author    = {Lopez, Pablo Alvarez and Behrisch, Michael and Bieker-Walz, Laura and Erdmann, Jakob and Fl{\"o}tter{\"o}d, Yun-Pang and Hilbrich, Robert and L{\"u}cken, Leonhard and Rummel, Johannes and Wagner, Peter and Wie{\ss}ner, Evamarie},
  title     = {Microscopic Traffic Simulation Using {SUMO}},
  booktitle = {2018 21st International Conference on Intelligent Transportation Systems (ITSC)},
  year      = {2018},
  pages     = {2575--2582},
  publisher = {IEEE},
  month     = nov
}

@article{kemmerling2023beyond,
  title={Beyond games: a systematic review of neural Monte Carlo tree search applications},
  author={Kemmerling, Marco and L{\"u}tticke, Daniel and Schmitt, Robert H},
  journal={arXiv preprint arXiv:2303.08060},
  year={2023}
}

@inproceedings{xu2018jk,
  title={Representation Learning on Graphs with Jumping Knowledge Networks},
  author={Xu, Keyulu and Li, Chengtao and Tian, Yonglong and Sonobe, Tomohiro and Kawarabayashi, Ken-ichi and Jegelka, Stefanie},
  booktitle={International Conference on Machine Learning (ICML)},
  year={2018}
}

@article{kwon2020pomo,
  title={Pomo: Policy optimization with multiple optima for reinforcement learning},
  author={Kwon, Yeong-Dae and Choo, Jinho and Kim, Byoungjip and Yoon, Iljoo and Gwon, Youngjune and Min, Seungjai},
  journal={Advances in neural information processing systems},
  volume={33},
  pages={21188--21198},
  year={2020}
}

@techreport{apta2025ridership,
  title = {Public Transportation Ridership Update},
  author = {American Public Transportation Association},
  institution = {APTA},
  year = {2025},
  type = {Policy Brief}
}

@techreport{worldbank2024mobility,
  title = {Promoting Livable Cities by Investing in Urban Mobility},
  author = {{World Bank}},
  institution = {World Bank Group},
  year = {2024},
  type = {Results Brief}
}

@techreport{transitcenter2019,
  title = {Who's On Board 2019: How to Win Back America's Transit Riders},
  author = {{TransitCenter}},
  institution = {TransitCenter},
  year = {2019},
  address = {New York}
}

@article{bertsimas2021data,
  title={Data-driven transit network design at scale},
  author={Bertsimas, Dimitris and Ng, Yee Sian and Yan, Julia},
  journal={Operations Research},
  volume={69},
  number={4},
  pages={1118--1133},
  year={2021}
}

@article{mankowitz2023faster,
  title={Faster sorting algorithms discovered using deep reinforcement learning},
  author={Mankowitz, Daniel J and Michi, Andrea and Zhernov, Anton and Gelmi, Marco and Selvi, Marco and Paduraru, Cosmin and Leurent, Edouard and Iqbal, Shariq and Lespiau, Jean-Baptiste and Ahern, Alex and others},
  journal={Nature},
  volume={618},
  number={7964},
  pages={257--263},
  year={2023},
  publisher={Nature Publishing Group UK London}
}

@article{fawzi2022discovering,
  title={Discovering faster matrix multiplication algorithms with reinforcement learning},
  author={Fawzi, Alhussein and Balog, Matej and Huang, Aja and Hubert, Thomas and Romera-Paredes, Bernardino and Barekatain, Mohammadamin and Novikov, Alexander and R. Ruiz, Francisco J and Schrittwieser, Julian and Swirszcz, Grzegorz and others},
  journal={Nature},
  volume={610},
  number={7930},
  pages={47--53},
  year={2022},
  publisher={Nature Publishing Group UK London}
}

@article{manser2020designing,
  title={Designing a large-scale public transport network using agent-based microsimulation},
  author={Manser, Patrick and Becker, Henrik and H{\"o}rl, Sebastian and Axhausen, Kay W},
  journal={Transportation Research Part A: Policy and Practice},
  volume={137},
  pages={1--15},
  year={2020},
  publisher={Elsevier}
}

@article{nnene2023simulation,
  title={A simulation-based optimization approach for designing transit networks},
  author={Nnene, Obiora A and Joubert, Johan W and Zuidgeest, Mark HP},
  journal={Public Transport},
  volume={15},
  number={2},
  pages={377--409},
  year={2023},
  publisher={Springer Verlag}
}

@techreport{cambridge2005congestion,
  author      = {{Cambridge Systematics, Inc.} and {Texas Transportation Institute}},
  title       = {Traffic Congestion and Reliability: {Trends} and Advanced Strategies for Congestion Mitigation},
  institution = {Federal Highway Administration},
  type        = {Final Report},
  number      = {FHWA-HOP-05-064},
  address     = {Washington, DC},
  year        = {2005},
  month       = sep,
  url         = {https://ops.fhwa.dot.gov/congestion_report/}
}

@inproceedings{coulom2006efficient,
  title={Efficient selectivity and backup operators in Monte-Carlo tree search},
  author={Coulom, R{\'e}mi},
  booktitle={International conference on computers and games},
  pages={72--83},
  year={2006},
  organization={Springer}
}

@article{browne2012survey,
  title={A survey of monte carlo tree search methods},
  author={Browne, Cameron B and Powley, Edward and Whitehouse, Daniel and Lucas, Simon M and Cowling, Peter I and Rohlfshagen, Philipp and Tavener, Stephen and Perez, Diego and Samothrakis, Spyridon and Colton, Simon},
  journal={IEEE Transactions on Computational Intelligence and AI in games},
  volume={4},
  number={1},
  pages={1--43},
  year={2012},
  publisher={IEEE}
}

@article{lawler1985traveling,
  title={The traveling salesman problem: a guided tour of combinatorial optimization},
  author={Lawler, Eugene L},
  journal={Wiley-Interscience Series in Discrete Mathematics},
  year={1985}
}

@book{toth2014vehicle,
  title={Vehicle routing: problems, methods, and applications},
  author={Toth, Paolo and Vigo, Daniele},
  year={2014},
  publisher={SIAM}
}
